\documentclass{article}

\usepackage[preprint]{extra/neurips_2026}

\usepackage[utf8]{inputenc} 
\usepackage[T1]{fontenc}    
\usepackage{hyperref}       
\usepackage{url}            
\usepackage{booktabs}       
\usepackage{tabularx}      
\usepackage[table]{xcolor}
\usepackage{multirow}
\usepackage{amsfonts}       
\usepackage{nicefrac}       
\usepackage{microtype}      
\usepackage{xcolor}         
\usepackage{amsmath}
\usepackage{amsthm}
\usepackage{mathtools}
\usepackage{cleveref}
\usepackage{algorithm}
\usepackage{algorithmic}
\usepackage{lipsum}
\usepackage{setspace}
\usepackage{thm-restate}
\usepackage{enumitem}
\usepackage{subcaption}
\crefname{section}{Sec.}{Sec.}
\crefname{thm}{Thm.}{Theorem}
\crefname{corr}{Cor.}{Corollary}
\crefname{appendix}{App.}{Appendices}
\crefname{algorithm}{Alg.}{Algorithms}
\crefname{equation}{Eq.}{Eqs.}
\crefname{figure}{Fig.}{Figs.}
\crefname{prop}{Prop.}{Props.}
\creflabelformat{equation}{#2\textup{#1}#3}

\definecolor{TolMutedBlue}{HTML}{332288}
\definecolor{TolMutedGreen}{HTML}{117733}
\definecolor{TolMutedPurple}{HTML}{AA4499}
\definecolor{TolMutedRed}{HTML}{BB3322}

\hypersetup{
    colorlinks=true,
    citecolor=TolMutedBlue,%
    linkcolor=TolMutedGreen,%
    urlcolor=TolMutedPurple,%
}

\newcommand{\loss}{\ell}

\newcommand{\vparam}{\boldsymbol{\theta}}

\newcommand{\vParam}{\vTheta}

\newcommand\cut[1]{}

\newcommand{\squishlist}{
   \begin{list}{$\bullet$}
    { \setlength{\itemsep}{0pt}      \setlength{\parsep}{3pt}
      \setlength{\topsep}{3pt}       \setlength{\partopsep}{0pt}
      \setlength{\leftmargin}{1.5em} \setlength{\labelwidth}{1em}
      \setlength{\labelsep}{0.5em} } }

\newcommand{\squishlisttwo}{
   \begin{list}{$\bullet$}
    { \setlength{\itemsep}{0pt}    \setlength{\parsep}{0pt}
      \setlength{\topsep}{0pt}     \setlength{\partopsep}{0pt}
      \setlength{\leftmargin}{2em} \setlength{\labelwidth}{1.5em}
      \setlength{\labelsep}{0.5em} } }

\newcommand{\squishend}{
    \end{list}  }

{}
{}
{}

\newtheorem{lemma}{Lemma}{}

{}

\newcommand{\half}{\mbox{$\frac{1}{2}$}}

\newcommand{\gauss}{\mbox{${\cal N}$}}

\newcommand{\myvec}[1]{\mbox{$\mathbf{#1}$}}
\newcommand{\myvecsym}[1]{\mbox{$\boldsymbol{#1}$}}

\newcommand{\vepsilon}{\boldsymbol{\epsilon}}

\newcommand{\vTheta}{\mbox{$\myvecsym{\Theta}$}}
\newcommand{\vsigma}{\mbox{$\myvecsym{\sigma}$}}
\newcommand{\vSigma}{\mbox{$\myvecsym{\Sigma}$}}

\newcommand{\vg}{\mbox{$\myvec{g}$}}
\newcommand{\vh}{\mbox{$\myvec{h}$}}

\newcommand{\vm}{\mbox{$\myvec{m}$}}

\newcommand{\vs}{\mbox{$\myvec{s}$}}

\newcommand{\vw}{\mbox{$\myvec{w}$}}

\newcommand{\vx}{\mbox{$\myvec{x}$}}

\newcommand{\vy}{\mbox{$\myvec{y}$}}

\newcommand{\vD}{\mbox{$\myvec{D}$}}
\newcommand{\vE}{\mbox{$\myvec{E}$}}
\newcommand{\vF}{\mbox{$\myvec{F}$}}
\newcommand{\vG}{\mbox{$\myvec{G}$}}
\newcommand{\vH}{\mbox{$\myvec{H}$}}
\newcommand{\vI}{\mbox{$\myvec{I}$}}

\newcommand{\vL}{\mbox{$\myvec{L}$}}
\newcommand{\vM}{\mbox{$\myvec{M}$}}

\newcommand{\vP}{\mbox{$\myvec{P}$}}
\newcommand{\vQ}{\mbox{$\myvec{Q}$}}
\newcommand{\vR}{\mbox{$\myvec{R}$}}

\newcommand{\vT}{\mbox{$\myvec{T}$}}
\newcommand{\vU}{\mbox{$\myvec{U}$}}
\newcommand{\vV}{\mbox{$\myvec{V}$}}

\newcommand{\be}{\begin{equation}}
\newcommand{\ee}{\end{equation}}
\newcommand{\bea}{\begin{eqnarray}}
\newcommand{\eea}{\end{eqnarray}}
\newcommand{\beaa}{\begin{eqnarray*}}
\newcommand{\eeaa}{\end{eqnarray*}}

\newcommand{\bbR}{\mathbb{R}}

\DeclarePairedDelimiterX{\infdivx}[2]{{}}{{}}{%
  \left( #1\,\delimsize\|\,#2\right)%
}

\DeclareMathOperator{\KLop}{KL}
\newcommand{\myKL}{{\KLop}\infdivx}

\usepackage{pifont}

\title{SOAP-Bubbles $\circ$\ding{109} \\ Structured Weight Uncertainty for Neural Networks}

\author{
Adrian R. Minut$^{1}$\thanks{Part of this work done during an internship at RIKEN AIP.} \quad Nico Daheim$^{2}$ \quad Marco Miani$^{3}$\\
\textbf{Mohammad Emtiyaz Khan}$^{4,5}$ \quad \textbf{Wu Lin}$^{6}$ \quad \textbf{Thomas Möllenhoff}$^{5}$\thanks{Correspondence to: {\tt\small thomas.moellenhoff@riken.jp}}\\[0.2cm]
$^1$Sapienza University of Rome, Rome, Italy\\
$^2$Ubiquitous Knowledge Processing Lab (UKP Lab),\\ Department of Computer Science, Technical University of Darmstadt\\
National Research Center for Applied Cybersecurity ATHENE, Germany\\
$^3$Technical University of Denmark, Lyngby, Denmark\\
$^4$TU Darmstadt \& Hessian Center for AI (hessian.AI), Darmstadt, Germany\\
$^5$RIKEN Center for Advanced Intelligence Project, Tokyo, Japan \\
$^6$University of Central Florida, Orlando, United States \\
}

\begin{document}

\maketitle

\begin{abstract}
     Structured weight-uncertainty can improve many aspects of deep learning, but it remains costly to estimate and difficult to implement. Here, we show that these issues can be addressed by adapting the SOAP optimizer. Our key idea is to run IVON, an existing diagonal-covariance variational method, in the eigenspace of SOAP's preconditioner and then use the preconditioner to transform the diagonal estimate into a non-diagonal covariance. The resulting method has costs similar to those of SOAP and requires no drastic changes to training pipelines. We call the posteriors obtained in this way SOAP-Bubbles and our new optimizer Eigenspace-VON (EVON). We show that, for logistic regression, EVON recovers the exact Gaussian covariance and that, for language model pretraining, it yields significantly better results than existing diagonal-covariance methods. Our work makes it easier to estimate more expressive posterior distributions for deep learning at scale. Code is available at: \url{https://github.com/team-approx-bayes/evon}.
\end{abstract}

\section{Introduction} 
{Weight uncertainty promises many improvements over current deep learning recipes. 
For example, it can be useful for exploration in reinforcement learning~\citep{sehnke2010parameter}, to better weight models in model merging~\citep{daheim2024model}, to improve language generation by combining the predictions of many model samples~\citep{daheim2025uncertainty, gan2026neural}, or to provide a better understanding of model behavior via improved influence estimation~\citep{nickl2023memory}.
There is a long list of other applications~\citep[inter alia]{NgLi18,OsSw19,wang2024blob,cong2025improving, khan2025knowledge,mollenhoff2025federated} but, ultimately, techniques for estimating weight uncertainty explicitly during training are rarely used in practice and real-world settings.

One reason for this gap is that available variational learning methods for large models, such as IVON~\citep{shen2024variational}, rely on diagonal Gaussians. This diagonal structure restricts the expressivity of the learned distribution and the resulting weight uncertainty. Structured (non-diagonal) approximations, such as those using Kronecker-factored structures~\citep{ImBa21,daxberger2021laplace,eschenhagen2023kronecker,dhahri2024shaving,hong2025better}, can learn much richer uncertainty representations. Yet, these post-hoc methods, such as Laplace's method, introduce significant overhead by requiring stored training examples and a secondary pass over the data. Existing structured variational methods~\citep{bae2018eigenvalue,ZhSu18,louizos2016structured,mishkin2018slang,LiKh19b,pmlr-v139-lin21e,fadel2025viking} are difficult to implement, incur substantial memory and compute overheads, and have not yet been shown to scale effectively to modern architectures like language models. Ideally, we would like a method that combines ease of implementation, minimal overhead, and scalability with expressive uncertainty estimation.

To bridge this gap, we introduce \textit{SOAP-Bubbles}, a novel family of expressive, non-diagonal posterior distributions. SOAP-Bubbles are transformations of diagonal covariances, where the transformation is a rotation constructed from SOAP's preconditioner~\citep{vyas2024soap}. By considering a diagonal posterior within a rotated subspace, SOAP-Bubbles can capture block-diagonal covariances in the original space without the prohibitive costs usually associated with structured covariance approximations. We illustrate the main idea in~\Cref{fig:fig1} (left), where compared to the axis-aligned diagonal Gaussian, the SOAP-Bubble provides a superior fit to the exact posterior. In our theoretical contribution, we show that SOAP-Bubbles can often recover the optimal full-Gaussian posterior approximation, for example in binary logistic regression.

To efficiently optimize and sample from these structured posteriors at scale, we propose the Eigenspace Variational Online Newton (EVON) algorithm. EVON entails only a simple modification to the SOAP optimizer~\citep{vyas2024soap}. SOAP (ShampoO with Adam in the Preconditioner's Eigenbasis) constructs a block-diagonal preconditioner from stochastic minibatch gradients and runs Adam within the resulting eigenspace. EVON builds directly on this structure, operating in the same eigenspace but replacing Adam with the mean-field variational method IVON~\citep{shen2024variational}. This maps a diagonal variational method in the eigenspace to a non-diagonal, expressive posterior in the original weight space, achieving the performance of structured variational inference without requiring extensive changes to standard training pipelines.

\begin{figure}[t!]
  \includegraphics[width=\linewidth]{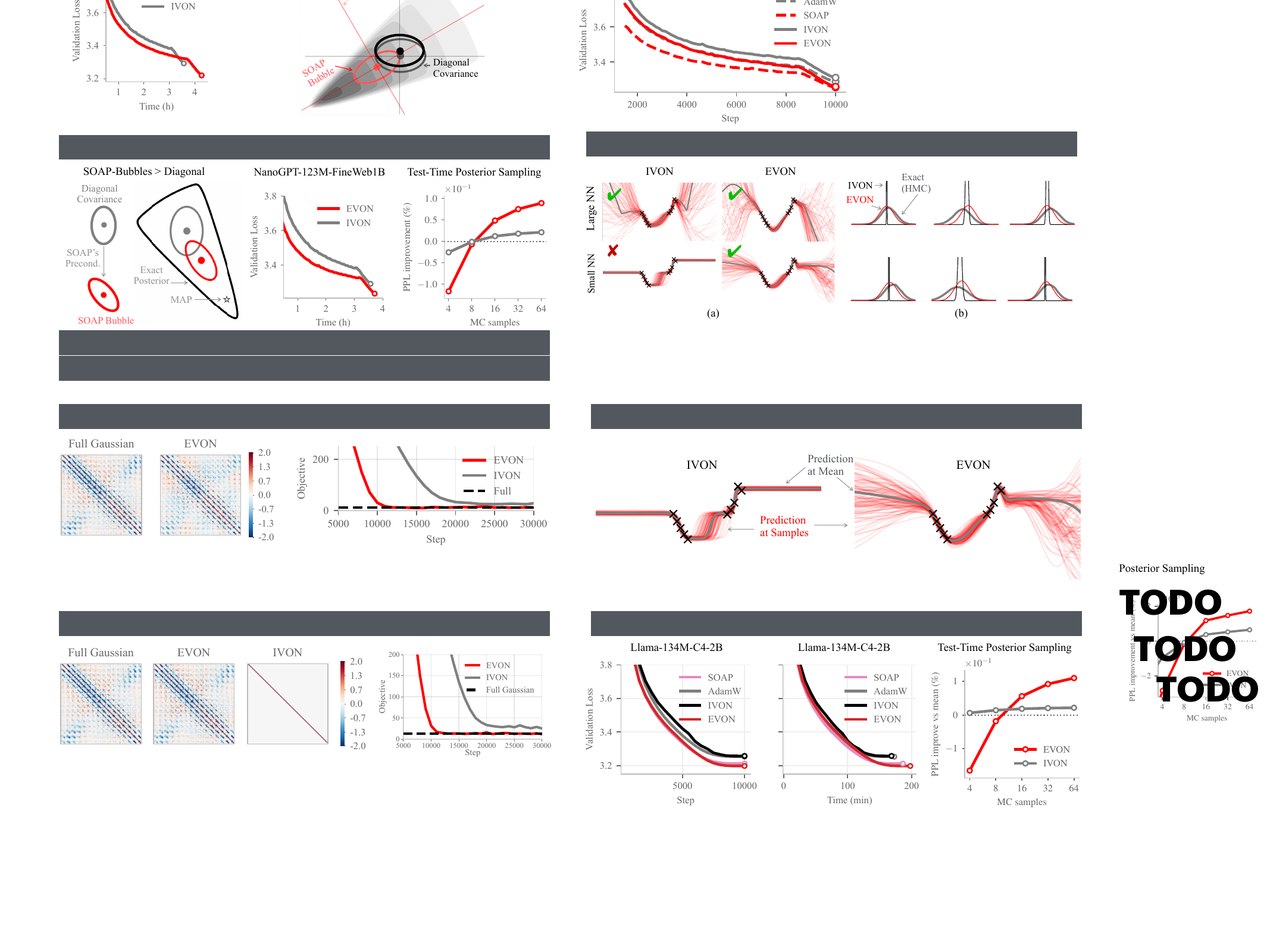}
  \caption{Our key idea is to use SOAP's preconditioner to transform diagonal covariances into non-diagonal ones. We call the resulting posteriors SOAP-Bubbles and they provide a much better fit to the exact posterior. Our new optimizer EVON runs the state-of-the-art variational method IVON in SOAP's eigenspace and significantly improves both the training speed and final loss for language models (here, NanoGPT). SOAP-Bubble posteriors obtained with EVON also work much better for model averaging during test-time than IVON's diagonal posteriors. 
  \label{fig:fig1}
}
\end{figure}

In our experiments, we demonstrate that EVON allows SOAP-Bubbles to scale to language model training, yielding consistently better results than IVON under the same computational budget. For example, when training NanoGPT on FineWeb-1B~\citep{fineweb}, EVON significantly improves validation loss, as shown in \Cref{fig:fig1} (middle). Furthermore, ensembling models sampled from SOAP-Bubbles substantially outperforms ensembling from IVON's diagonal posterior, see \Cref{fig:fig1} (right). This confirms that our approach not only improves training optimization dynamics but successfully captures a richer representation of weight uncertainty.

To contextualize EVON's design and understand why capturing these non-diagonal covariances has historically been so challenging, we next turn to the foundations of variational learning and prior efforts to model structured weight uncertainty.
}
\section{Variational Learning and Structured Weight Uncertainty}
Scalable and practical estimation of non-diagonal (or structured) weight uncertainty for large deep networks remains an important open problem. Post-hoc Laplace approximations~\citep{daxberger2021laplace,eschenhagen2023kronecker} have been scaled to modern neural networks, but require an additional pass through the training data which comes with extra costs and is not always available. In contrast, variational learning methods estimate weight uncertainty during training but scalable implementations are typically restricted to the mean-field (diagonal variance) setting. We now introduce variational learning (VL) in detail, and discuss challenges of VL and going beyond diagonal variances.

In deep learning we usually minimize a loss function $\ell(\vparam) = \sum_{i=1}^n \ell_i(\vparam) / n$ over $n$ data examples with respect to the parameter $\vparam$. To model weight uncertainty, VL instead searches for a distribution over the parameters and targets a related objective over $q(\vparam)$,  
\begin{equation}
  \min_{q \in \mathcal{Q}} ~ \zeta \, \mathbb{E}_{q(\text{$\vparam$})}[\ell(\vparam)] + \myKL{q(\vparam)}{p(\vparam)},
  \label{eq:varlearn}
\end{equation}
where $\mathcal{Q}$ is a set of candidate posterior distributions. The additional Kullback-Leibler term regularizes towards a prior $p(\vparam)$, here assumed to be an isotropic Gaussian with zero-mean. 
This corresponds to a quadratic regularizer (non-decoupled weight-decay) and in this paper, we will always use prior variance $(\zeta \delta)^{-1} \vI$, where $\delta > 0$ is the usual weight-decay parameter.

Variational learning for large deep nets faces several challenges, the most prominent being the choice of the posterior distribution family $\mathcal{Q}$. If $\mathcal{Q}$ is chosen as a family of Gaussian distributions with diagonal covariance, then every parameter in the neural network has a mean and a variance. Naively optimizing both the mean and the variance, as is often done with Adam or SGD in the variational Bayesian inference literature, doubles the number of optimization variables and departs from standard deep learning optimizers. 

\subsection{Diagonal Variational Methods and IVON}
It is possible to formulate algorithms for the variational objective in \Cref{eq:varlearn} which take similar form to the Adam optimizer and do not introduce additional variables, as shown by \citet{KhNi18,ZhSu18} and later scaled to larger models~\citep{OsSw19,shen2024variational}. These methods exploit a connection between Gaussian posteriors and second-order optimization. The main idea is that the update for the diagonal posterior variance takes a similar form to the exponential moving average of the diagonal curvature approximation tracked in Adam. Thus, only small modifications to Adam, such as adding the posterior noise to the weights before computing the gradient, are required to target the variational objective. For example, the Variational Online Newton (VON) method of \citet{KhNi18} can be implemented with the following update rules:
\begin{align}
  &\widehat \vh \gets \widehat \vg \circ (\vparam - \vm) / \vsigma^2, \quad
  \vh \gets \beta_2\,\vh + (1-\beta_2)\,\widehat \vh, \quad
  \vm \gets \vm - \alpha\, (\widehat \vg + \delta\,\vm)/(\vh + \delta), 
  \label{eq:vonstep}
\end{align}
where $\widehat \vg$ is the stochastic gradient evaluated at a weight sampled from the posterior $\vparam \sim \mathcal{N}(\vm,\,\mathrm{diag}(\vsigma^2))$ with variance $\vsigma^2 = 1 / (\zeta (\vh + \delta))$ is the posterior variance. \smash{$\widehat \vh$} is the per-parameter curvature estimate obtained via Stein's identity (element-wise product denoted by $\circ$), $\vh$ is the running average of the (positive) curvature with decay rate $\beta_2$, $\delta>0$ is a small prior/regularization constant and $\alpha$ is the learning rate. The mean update for $\vm$ uses the curvature as a preconditioner and a weight-decay term $\delta \vm$ due to the prior.

IVON adds an additional quadratic correction term to the $\vh$-update to keep the posterior variance positive, as well as momentum and clipping. Both VON and IVON however maintain only a single scalar curvature estimate per parameter (no extra auxiliary variables beyond the usual running average), injects Gaussian posterior noise via the sampled $\vparam$, and targets the variational objective in \Cref{eq:varlearn}. This makes it a natural baseline for diagonal variational methods, but as all diagonal variational methods, it cannot track inter-parameter correlations in the posterior covariance.

\subsection{Non-Diagonal Variational Methods}

Extending variational learning to Gaussians with non-diagonal covariances in a scalable manner is non-trivial. There has been earlier work on noisy variants of second-order methods with Kronecker-factored curvature approximations (K-FAC), see \citep{ZhSu18,bae2018eigenvalue}. However, K-FAC methods have not yet been shown to work well for modern language model training, and similar is to be expected of their noisy counterparts which target the variational objective. SLANG~\citep{mishkin2018slang} maintains a low-rank plus diagonal posterior, but has not been shown to scale to large problems and comes both with implementation and memory overheads. Similarly, the recent VIKING method~\citep{fadel2025viking} requires expensive projection steps to optimize non-diagonal covariances, and has not been shown to scale to transformer architectures. Bayesian online natural gradient (BONG)~\citep{jones2024bayesian} tracks non-diagonal covariances but has only been run on small MNIST-scale models. Variational methods for matrix Gaussian posteriors~\citep{louizos2016structured,LiKh19b,van2024noether} have so far also only been shown to work on smaller problems and usually require significant changes to training pipelines. Overall, no scalable and easy-to-use non-diagonal variational methods, i.e., implemented as optimizers, that can train modern neural networks exist so far. Our work aims to fill this gap.

\subsection{Kronecker-Factored Preconditioners and SOAP}

Recently, optimizers with Kronecker-factored preconditioners, including Shampoo~\citep{gupta18shampoo} and SOAP~\citep{vyas2024soap}, have gained attention for language model training.
For instance, a distributed implementation of Shampoo \citep{shi2023distributed} won a recent challenge on optimization for deep learning~\citep{kasimbeg2025accelerating}.
Here, we build on these advances.

Modern practical versions of the Shampoo optimizer, here written for a single 2D parameter matrix $\vParam$ inside a neural network, use the following update:
\begin{equation}
  \vL \gets \beta \vL +  (1 - \beta) \vG  \vG^\top, \quad \vR \gets \beta \vR +  (1 - \beta)  \vG^\top \vG, \quad \vParam \gets \vParam - \alpha \vL^{-1/p} \vG \vR^{-1/p},
\end{equation} where $\vG$ denotes the stochastic minibatch gradient. We do not consider momentum and weight decay for simplicity.
While the original Shampoo uses $p=4$, the recent works \citet{morwani2024new,vyas2024soap,eschenhagen2025purifying} suggest using a square root, that is, $p=2$.
\citet{vyas2024soap} show a striking similarity of the above update with $p=2$ to running a step of Adafactor or Adam in the eigenspace obtained from the above running average matrices $\vL$ and $\vR$:
\begin{equation}
  \vQ_L \gets \text{Eig}(\vL), \quad \vQ_R \gets \text{Eig}(\vR), \quad \vParam \gets \vParam - \alpha \vQ_L \vU \vQ_R^\top,
  \label{eq:soapspace}
\end{equation}
where $\vU$ is Adam's preconditioned update, that is, the gradient momentum divided by running average of squared gradients. A crucial point is that the momentum and preconditioner are built from the projected gradients \smash{$\vG^\circ = \vQ_L^\top \vG \vQ_R$}. 
They refer to the above method as '{S}hampo{O} with {A}dam in the {P}reconditioners Eigenbasis' (SOAP), and show promising performance across various language model pretraining tasks. We summarize SOAP for a single layer in \Cref{alg:soap}.

In the following, we show that the SOAP optimizer can be modified to minimize \Cref{eq:varlearn} for a variational family of Gaussian posteriors with structured covariance which we call SOAP-Bubbles. The main idea is to replace the Adam update used in SOAP in the eigenspace spanned by $\vQ_L$ and $\vQ_R$ from \Cref{eq:soapspace} in order to target the variational objective. 
\section{SOAP-Bubble: Posteriors within the SOAP Eigenbasis}
To formalize this idea, we consider a variational family defined within a transformed coordinate space. Rather than learning a diagonal Gaussian directly in the original parameter space $\vparam$, as IVON~\citep{shen2024variational} does, we define a diagonal Gaussian over a transformed variable $\vw$:
\begin{equation}
  \mathcal{Q}_{\text{diag}} = \left \{ \gauss(\vw \mid \vm, \text{diag}(\vsigma^2)) \mid \vm \in \mathbb{R}^P, \vsigma \in \mathbb{R}_+^P  \right \},
\end{equation} 
$\vw$ is connected to the original parameter via $\vparam = \vP \vw$, where $\vP$ is assumed to be an orthonormal matrix in the following. Under this transformation, the corresponding family in the original space has non-diagonal covariances:

\begin{equation}
\mathcal{Q}_{\text{bubble}} = \left\{ \, \gauss(\vparam \mid \vP\vm, \vP \text{diag}(\vsigma^2) \vP^\top) \mid \vm \in \mathbb{R}^P, \vsigma \in \mathbb{R}_+^P \, \right\}.
\end{equation}
 Later, we will pick $\vP$ to be a rotation built from a Kronecker product of SOAP's $\vQ_R$ and $\vQ_L$ matrices, and will therefore refer to elements $q \in \mathcal{Q}_{\text{bubble}}$ as a SOAP-Bubble. 

The main insight to allow for an efficient implementation is that the variational objective for the family $\mathcal{Q}_{\text{bubble}}$ with structured covariance can actually be written over the diagonal family with a transformed loss. The following theorem formalizes this fact.
  \begin{restatable}{thm}{reparamthm}
\label{thm:reparam}
  For the above variational families $\mathcal{Q}_{\text{diag}}$ and $\mathcal{Q}_{\text{bubble}}$, the following holds:
\begin{equation}
  \begin{aligned}
  \eqref{eq:varlearn} \,\, &= \min_{q \in \mathcal{Q}_{\textup{bubble}}} ~ \zeta \, \mathbb{E}_{q(\text{$\vparam$})}[\ell(\vparam)] + \myKL{q(\vparam)}{\gauss(\vparam \mid \vm_0, \vSigma_0)} \\
  &= \min_{q \in \mathcal{Q}_{\textup{diag}}} ~ \zeta \, \mathbb{E}_{q(\text{$\vw$})}[\ell(\vP \vw)] + \myKL{q(\vw)}{\gauss(\vw \mid \vP^\top \vm_0, \vP^\top \vSigma_0 \vP)} \label{eq:wobj}.
  \end{aligned}
\end{equation}
\end{restatable}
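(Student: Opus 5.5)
The plan is to treat the first equality as notation and prove the second by a change of variables, using that the linear map $\vw \mapsto \vP\vw$ is a bijection between the two families. Reading \eqref{eq:varlearn} with $\mathcal{Q}=\mathcal{Q}_{\text{bubble}}$ and the Gaussian prior written generally as $p(\vparam)=\gauss(\vparam\mid\vm_0,\vSigma_0)$ gives the first line directly. The paper's isotropic choice is the special case $\vm_0=\mathbf{0}$, $\vSigma_0=(\zeta\delta)^{-1}\vI$. So the real content is the second equality, and I will show that both objectives agree pointwise under this bijection.

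\textbf{Step 1 (bijection of families).} Define $T(\vw)=\vP\vw$. Since $\vP$ is orthonormal, $T$ is invertible with $T^{-1}(\vparam)=\vP^\top\vparam$ and $|\det\vP|=1$. The pushforward of $q(\vw)=\gauss(\vw\mid\vm,\mathrm{diag}(\vsigma^2))$ under $T$ is $\gauss(\vparam\mid\vP\vm,\vP\,\mathrm{diag}(\vsigma^2)\vP^\top)$. By definition, every element of $\mathcal{Q}_{\text{bubble}}$ arises this way from the same parameters $(\vm,\vsigma)$. Hence $q\mapsto T_\# q$ is a surjection from $\mathcal{Q}_{\text{diag}}$ onto $\mathcal{Q}_{\text{bubble}}$. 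Minimizing over one family is therefore the same as minimizing the composed objective over the other.

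\textbf{Step 2 (expected loss).} For $q(\vparam)=T_\#q(\vw)$, the law of the unconscious statistician gives $\mathbb{E}_{q(\vparam)}[\ell(\vparam)]=\mathbb{E}_{q(\vw)}[\ell(\vP\vw)]$.

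\textbf{Step 3 (KL term).} The prior pulls back under $T$ to $T^{-1}_\#\gauss(\vm_0,\vSigma_0)=\gauss(\vP^\top\vm_0,\vP^\top\vSigma_0\vP)$. KL divergence is invariant under a common bijective reparametrization of both arguments. Concretely, the densities transform as $q_{\vparam}(\vP\vw)=q_{\vw}(\vw)/|\det\vP|$, and the same factor appears for the prior, so the Jacobians cancel inside the log ratio. Substituting $\vparam=\vP\vw$ in the KL integral then gives
\[
\myKL{q(\vparam)}{\gauss(\vparam\mid\vm_0,\vSigma_0)}=\myKL{q(\vw)}{\gauss(\vw\mid\vP^\top\vm_0,\vP^\top\vSigma_0\vP)}.
\]
As a cross-check, I would verify this from the closed-form Gaussian KL. The trace, Mahalanobis and log-determinant terms are each unchanged under conjugation by an orthonormal $\vP$, using $\operatorname{tr}(\vP^\top A\vP)=\operatorname{tr}(A)$ and $\det(\vP^\top A\vP)=\det A$.

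Combining Steps 1--3, the two objectives coincide for corresponding $q$, so their minima (or infima) coincide, which proves \eqref{eq:wobj}. The only point needing care is Step 3. One must apply the same change of variables to both arguments of the KL and note that orthonormality makes the Jacobian trivial. I do not expect a real obstacle: the argument only needs $\vP$ invertible, and orthonormality just keeps the formulas clean.
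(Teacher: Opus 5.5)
Your proposal is correct and follows essentially the same route as the paper. Both arguments identify $\mathcal{Q}_{\text{bubble}}$ as the pushforward of $\mathcal{Q}_{\text{diag}}$ under $\vw \mapsto \vP\vw$, rewrite the expected loss by change of variables, and move the prior into $\vw$-space using KL invariance under a common invertible map with $\vP^{-1} = \vP^\top$. The only difference is in how KL invariance is justified: you verify it directly via Jacobian cancellation and a closed-form Gaussian check, whereas the paper cites it as the equality case of the data-processing inequality.
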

The result follows directly from the invariance of the KL under invertible transformations, which is the equality case in the data-processing inequality~\citep{cover1999elements}. A complete proof is in \Cref{app:thm1}.

\subsection{Eigenbasis Variational Online Newton (EVON)}

\begin{figure}[t!]
\begin{minipage}{0.5\linewidth}
\input{algorithms/algorithm_soap}
\end{minipage}
\begin{minipage}{0.5\linewidth}
\input{algorithms/algorithm}
\end{minipage}
\caption{EVON entails only simple modifications to the SOAP optimizer (highlighted in \textcolor{TolMutedRed}{\text{red}}). Given an existing implementation of SOAP, it is easy to implement and scales to large problems as SOAP does. $\odot$ denotes element-wise multiplication of matrices, $/$ elementwise division and $^{\odot 2}$ elementwise squaring. As in \citet{vyas2024soap}, the updates are written here for a single 2D parameter matrix.}
\end{figure}

Using \Cref{thm:reparam}, we can obtain our proposed EVON method shown in \Cref{alg:evon}. The method is derived by applying IVON~\citep{LiSc20,shen2024variational} to the variational objective in $q(\vw)$, where we set $\vP = \vQ_R \otimes \vQ_L$ as the Kronecker product of the matrices estimated in SOAP. In particular, we use \citep[Alg.~1]{shen2024variational} and rewrite the iteration in terms of the posterior in the original space~$\vparam$.  The variational objective in $q(\vw)$ then fits exactly the form used in \citep[Eq.~1]{shen2024variational} under a transformed loss \smash{$\tilde \ell(\vw) = \ell (\vP \vw)$}. A full self-contained derivation is given in \Cref{app:derivation}, and the resulting EVON method is shown in \Cref{alg:evon}.

The modifications made in red to SOAP in~\Cref{alg:evon} cause the EVON method to estimate a variational posterior with non-diagonal covariance in the original space $\vparam$: 
\begin{equation}
q(\vparam) = \gauss(\vparam \mid \text{vec}(\vM), (\vQ_R \otimes \vQ_L) \, \text{diag}(\text{vec}(\vV)) \, (\vQ_R^\top \otimes \vQ_L^\top)).
\label{eq:evonpost}
\end{equation}
EVON minimizes \Cref{eq:varlearn} for this family of posteriors, whereas SOAP just targets the regular loss $\ell(\vparam)$. We will now go over all the modifications that EVON makes to SOAP in \Cref{alg:evon}, one by one. 
\begin{enumerate}[leftmargin=16pt,label=(\roman*),labelsep=0.5em,itemsep=0pt, topsep=0pt]
  \item Unlike SOAP, EVON uses random samples drawn from the variational posterior (see \Cref{eq:evonpost}) in lines 1 and 2 and then computes the gradient at the randomly sampled weights in line 3. This yields an unbiased estimator of the expected loss $\mathbb{E}_{q(\text{$\vparam$})}[\ell(\vparam)]$ in the variational objective rather than targeting the regular loss $\ell(\vparam)$ as in SOAP.

\item Line 4 changes the squared-gradient to an estimator of the (expected) Hessian. It is known that using squared-gradients approximation to the Hessian within a variational method leads to biased solutions, even on simple logistic regression problems~\citep[Fig.~2a]{KhNi18}. Due to this change, EVON is able to recover exact solutions on simple variational Bayesian problems, though a squared-gradient heuristic may also work well for deep learning as it trades-off variance for bias~\citep{Gr11,OsSw19}. To reduce the variance, we added an optional clipping in line~4, which we found empirically to improve the method when training transformers.

\item Line 6 uses the quadratic correction term proposed by~\citet{LiSc20}, which ensures that the posterior precision remains positive. This is needed, since we departed from the squared-gradient estimator to a Hessian estimator, which may not be positive anymore in nonconvex problems. 

\item The preconditioned gradient in line 7 now includes the prior, written through the weight-decay parametrization $\delta$, and we precondition with the Hessian rather than the square-root.

\item Line 8 uses a clipping of the preconditioned gradient. EVON has the option to either use element-wise clipping as in IVON or the Sophia optimizer~\citep{liu2023sophia}, or to use spectral-clipping~\citep{carlson2015preconditioned,jordan2024muon}. 

\end{enumerate}

The remaining lines of the two algorithms are completely identical, allowing to implement EVON as a simple modification of SOAP's existing codebase. 
Since SOAP updates both $\vQ_L$ and $\vQ_R$ occasionally during the iteration, the variational family that is being optimized also changes during the course of training. When updating $\vQ_L$ and $\vQ_R$ in line~11, both for SOAP and EVON the momentum $\bar \vG$ is projected into the new space via \smash{$\bar \vG \gets \vQ_L^\top \vQ_L' \vG' (\vQ_R')^\top \vQ_R$}, where the apostrophe denotes the old $\vQ$ and $\bar \vG$. However, both methods do not update $\vH$ into the new space.

\begin{figure}[t]
  \includegraphics[width=\linewidth]{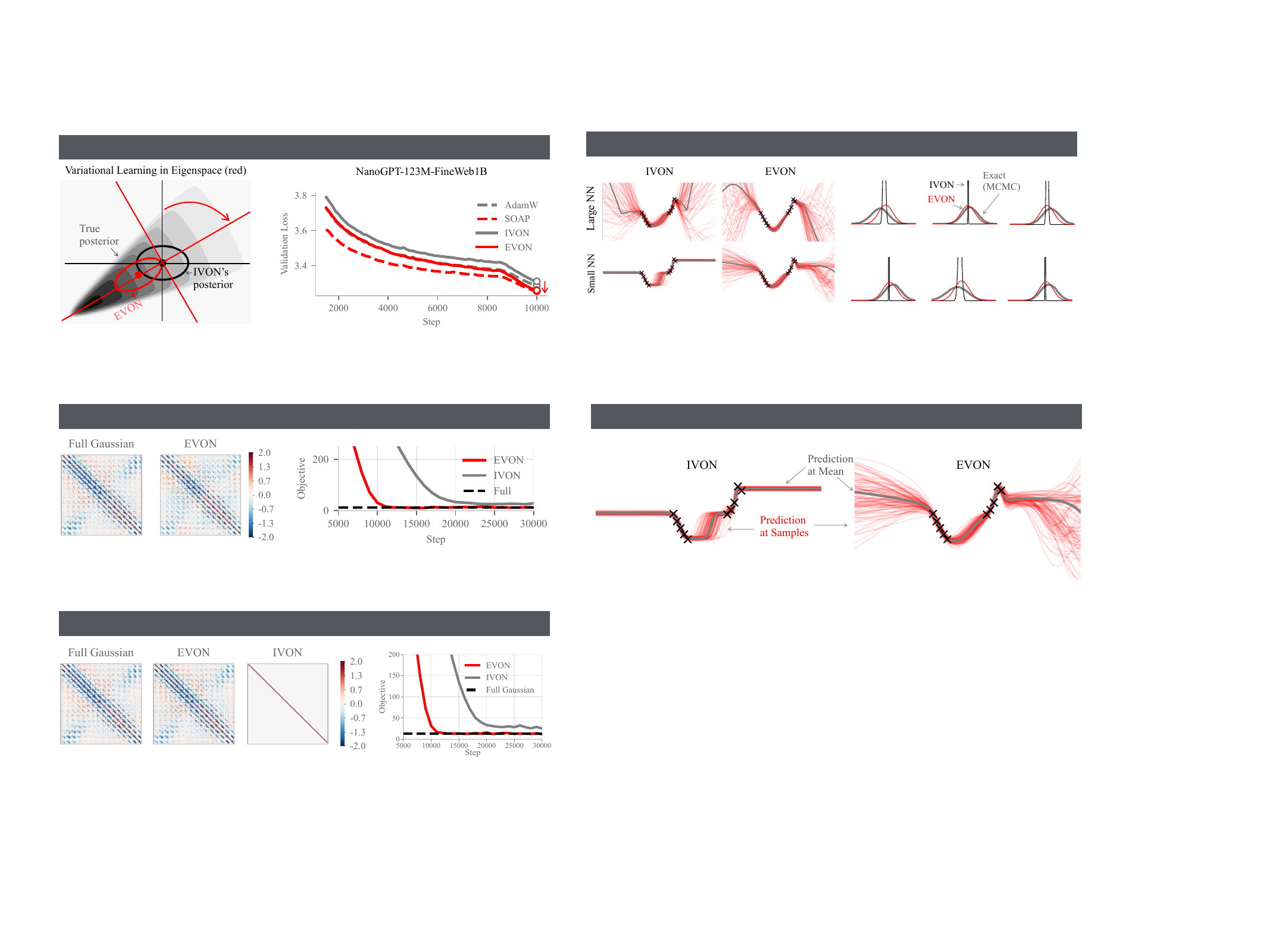}
  \caption{EVON is not only scalable to language models, but also exact in standard variational Bayesian inference applications. Here, we show that on a binary Bayesian logistic regression, EVON can recover the exact posterior among all full Gaussians. In the left, we plot EVON's covariance against the exact full $256\times256$ posterior covariance matrix on the USPS dataset (16x16 grayscale digits, 3vs5). EVON also matches the variational objective of the full Gaussian posterior whereas IVON converges slower and to a worse value. }
  \label{fig:binary-logreg}
\end{figure}

\subsection{Implementation Details}
\label{sec:impl}
We now discuss several key implementation details which are important to make EVON practical.

\textbf{Distributed Training.} 
 In distributed data-parallel training, using independent noise across GPUs is beneficial as it reduces the variance of these stochastic estimators~\citep{kingma2015variational,OsSw19}.
 EVON requires a careful implementation beyond a naive distributed data-parallel reduction of gradients to ensure the noisy gradients are aggregated consistently for the Hessian estimator.

\textbf{Approximate Eigenbasis.} Both SOAP and EVON use the QR decomposition as an efficient approximation to the eigendecomposition in line 11, as explained in \citet{vyas2024soap}. We use {\tt float32} precision for the (approximate) eigenbasis computation, but the rest can be kept in {\tt bfloat16}.

\textbf{Adaptive Hessian Clipping.} Optionally, we clip the Hessian estimator in line 4 adaptively relative to the current moving average of the Hessian. Specifically, if $\vF$ is the raw Hessian estimator in line 4 at a given step, we apply element-wise clipping to obtain $\text{clip}(\vF, -\gamma(\vH + \epsilon), \gamma(\vH + \epsilon))$, where $\vH$ is the current Hessian estimate, $\gamma$ is a clip ratio (defaulting to $10$), and $\epsilon$ is a small constant.

\textbf{Clipping.} EVON can either use element-wise clipping or spectral clipping to the final parameter update matrix. Spectral clipping is implemented using a quintic Newton-Schulz iteration, as in Muon~\citep{jordan2024muon}, which approximates the orthogonalization without requiring a full SVD. 

\begin{figure}[t!]
  \includegraphics[width=\linewidth]{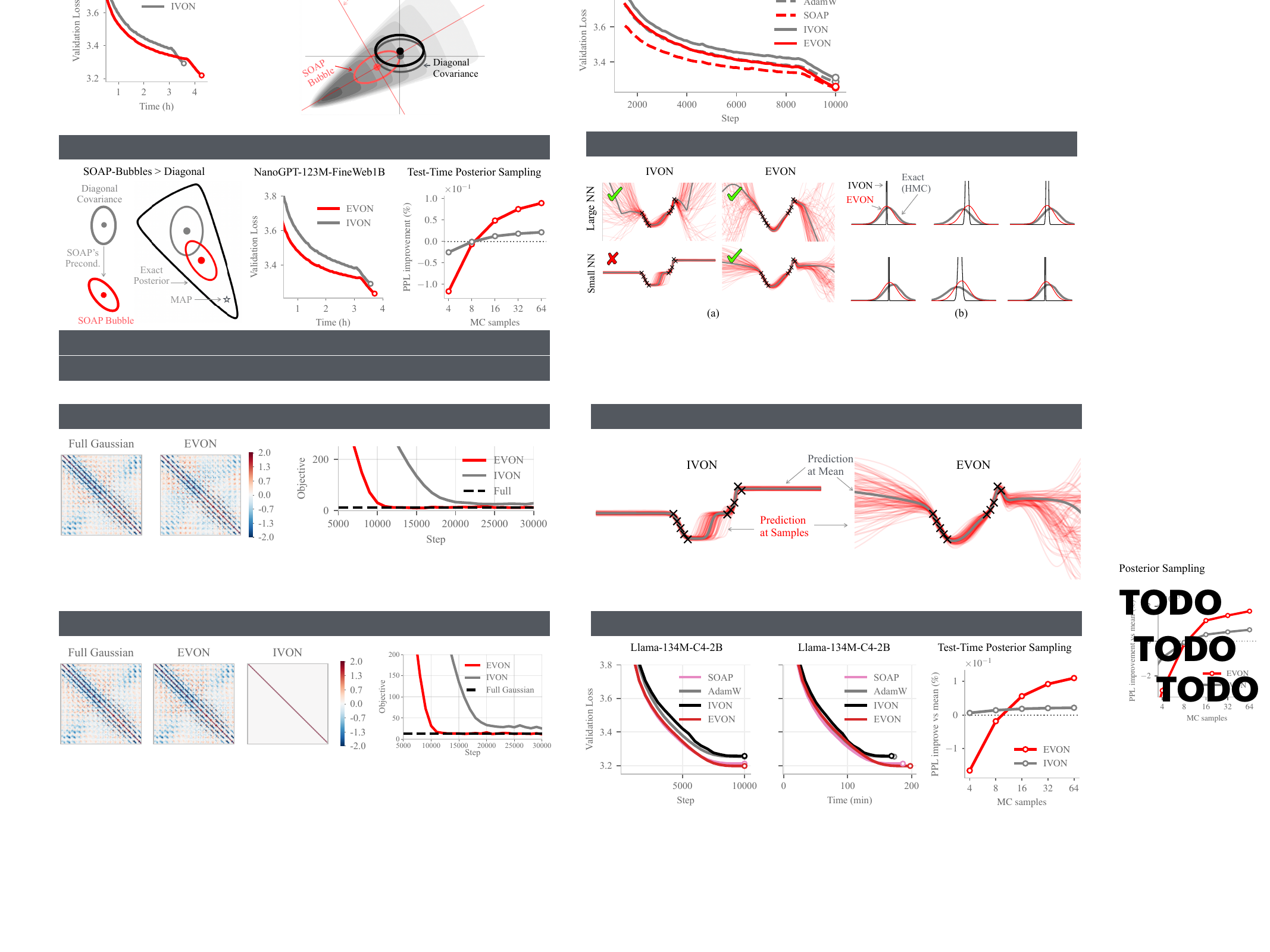}
  \caption{\textbf{(a)} We compare IVON and EVON's posterior samples on a small and a large MLP. While both methods work for large MLPs, for small networks, IVON's uncertainty estimates outside of the data are collapsed. In contrast, EVON gives more diverse posterior samples in regions where the data does not fully specify the solution, showing the strength of non-diagonal posterior covariances. \textbf{(b)}~We~show exact posterior marginal distributions (gray) computed with HMC~\citep{hoffman2014no,carpenter2017stan} on multiclass logistic regression on the UCI Iris dataset against IVON's mean-field approximation (black) and EVON's posterior (red). IVON's mean-field approximation underestimates the variance, while EVON better captures the HMC marginals. } 
  \label{fig:mlp}
\end{figure}

\subsection{Theoretical Guarantees}
Originally, we conceived EVON as a scalable way to obtain non-diagonal covariances for large-scale variational Bayesian training of transformer-based language models. In experiments on simple linear models we found that the method is surprisingly exact and often recovers the exact posterior among all multivariate Gaussian posteriors. Motivated by these practical findings, shown in \Cref{fig:binary-logreg}, we provide here a theoretical explanation under which conditions exact solutions are possible. 

The following theorem and corollary characterize sufficient conditions under which this is the case. 

\begin{restatable}{thm}{exactrecovery}
\label{thm:exactrecovery}
    Assume a linear model $f$, loss function $c$, an isotropic prior and a dataset \smash{$\mathcal{D}=\{(\vx_i, y_i)\}_{i=1,\ldots,n}$}. If the matrices $\mathbf{L}$ and $\{\vx_i \vx_i^\top\}_{i=1,\ldots,n}$ and the matrices $\mathbf{R}$ and $\{\mathbb{E}_{\vparam\sim q}[\nabla^2_{f} c(y_i,f(\vParam,\vx_i))]\}_{i=1,\ldots,n}$ are simultaneously diagonalizable, then the optimal posterior $q$ among all multivariate Gaussians is a SOAP-Bubble, that is, in EVON's variational family.
    \label{thm:theory}
~\end{restatable}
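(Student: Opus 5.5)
The plan is to use the fixed-point characterization of the optimal Gaussian and show that its precision matrix is diagonal in the basis $\vP = \vQ_R \otimes \vQ_L$. Write the model as $f(\vParam,\vx) = \vParam\vx$, with $\vParam$ of size $d_{\text{out}}\times d_{\text{in}}$, so that $\vparam = \text{vec}(\vParam)$. Here $\vL$ acts on the input side and $\vR$ on the output side; if the paper's indexing convention is the reverse, I will relabel the factors. The first-order optimality conditions of \Cref{eq:varlearn} over all Gaussians $\gauss(\vm,\vSigma)$ are standard:
\begin{equation*}
\vSigma^{-1} = \zeta\,\mathbb{E}_q[\nabla^2 \ell(\vparam)] + \zeta\delta\,\vI, \qquad 0 = \mathbb{E}_q[\nabla \ell(\vparam)] + \delta\,\vm .
\end{equation*}
These follow from Bonnet's and Price's theorems. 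I take the optimum $q^\star$ as given and only need to show that its covariance has the form $\vP\,\text{diag}(\cdot)\,\vP^\top$. That is, $\vP^\top \vSigma^{-1}\vP$ must be diagonal. The mean is unconstrained in $\mathcal{Q}_{\text{bubble}}$, since $\vm$ ranges over all of $\bbR^P$ through $\vP\vm$.

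The key step is to compute the Hessian of the linear model via the chain rule. Since $f$ is linear in $\vParam$, we have $\nabla^2_{\vparam}\, c(y_i, \vParam\vx_i) = (\vx_i\vx_i^\top)\otimes \vH_i(\vparam)$, where $\vH_i = \nabla_f^2 c$. Taking the expectation under $q$, the factor $\vx_i\vx_i^\top$ is constant, so
\begin{equation*}
\mathbb{E}_q[\nabla^2\ell] = \frac{1}{n}\sum_{i=1}^n (\vx_i\vx_i^\top)\otimes \bar\vH_i, \qquad \bar\vH_i = \mathbb{E}_{q}[\vH_i].
\end{equation*}
The ordering of the Kronecker factors depends on the vec convention. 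I will fix the ordering so that it matches $\vP=\vQ_R\otimes\vQ_L$, pairing the input-side factor with $\vL$ and the output-side factor with $\vR$ as in the hypothesis.

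By hypothesis, $\vL$ and all the $\vx_i\vx_i^\top$ commute and are diagonalizable in a common orthonormal basis. Since $\vL$ is symmetric PSD, I will pick $\vQ_L$ from its eigendecomposition so that it also diagonalizes every $\vx_i\vx_i^\top$. When $\vL$ has repeated eigenvalues, I choose the basis within each eigenspace to be the shared one; this choice is the only delicate point, and I will address it by stating that EVON's $\vQ_L$ is taken as such a common eigenbasis. Similarly, $\vQ_R$ diagonalizes all the $\bar\vH_i$. Then
\begin{equation*}
\vP^\top\big((\vx_i\vx_i^\top)\otimes\bar\vH_i\big)\vP = (\vQ^\top \vx_i\vx_i^\top\vQ)\otimes(\vQ'^\top\bar\vH_i\vQ') ,
\end{equation*}
which is a Kronecker product of diagonal matrices and therefore diagonal. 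Summing over $i$ and adding $\zeta\delta\vI$, which is invariant under the orthonormal $\vP$, shows that $\vP^\top\vSigma_\star^{-1}\vP$ is diagonal with positive entries. Hence $\vSigma_\star = \vP\,\text{diag}(\vsigma^2)\vP^\top$ and $q^\star\in\mathcal{Q}_{\text{bubble}}$.

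Because the full-Gaussian minimum is at most the bubble minimum, and $q^\star$ lies in $\mathcal{Q}_{\text{bubble}}$, the two minima coincide. By \Cref{thm:reparam}, $q^\star$ is then also what EVON's diagonal objective in $\vw$ targets.

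The main obstacle I anticipate is the circularity in the hypothesis. The condition on $\bar\vH_i$ is stated under the optimal $q$ itself, so the argument has to be phrased as "the stationary point satisfies..." rather than constructing $q$. A secondary issue is bookkeeping the Kronecker and vec ordering, including the possibly multi-output $\vH_i$. For binary logistic regression, $\vH_i$ is a scalar, so $\vR$ is trivially compatible and only the $\vL$ condition matters. I will remark on this case as the corollary.
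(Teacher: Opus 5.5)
Your proposal is correct and follows essentially the same route as the paper. It uses the fixed-point identity $\vSigma^{-1}=\zeta\,\mathbb{E}_q[\nabla^2\ell]+\zeta\delta\vI$ together with invariance of the isotropic prior under $\vP$ (the paper's Lemma~1), and conjugates the Kronecker-sum form of the expected Hessian by a product of common eigenbases to get a sum of Kronecker products of diagonal matrices (the paper's Lemma~3). You choose $\vQ_L,\vQ_R$ directly as those common eigenbases, which folds the paper's commutation detour and its Lemma~2 into one step; just adopt the paper's $f=\vParam^\top\vx$ convention so that $\mathbf{L}=\mathrm{EMA}[\vG\vG^\top]$ is genuinely the input-side ($d\times d$) factor matching $\vP=\vQ_R\otimes\vQ_L$.
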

\begin{restatable}{corr}{exactrecoverycorr}
\label{corr:exactrecoverycorr}
    Assume a linear model, and an isotropic prior. For binary logistic regression and for general linear regression, both with orthogonal datapoints/features $\vx_i$, the exact full Gaussian posterior among all multivariate Gaussians is a SOAP-Bubble, that is, in EVON's variational family.
\end{restatable}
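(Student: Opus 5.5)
The plan is to obtain the corollary from \Cref{thm:exactrecovery}. For each of the two settings I will exhibit Kronecker factors $\vL$ and $\vR$ satisfying its simultaneous-diagonalizability hypotheses. As a sanity check and a fallback, I will also verify the conclusion directly from the stationarity condition of the full-Gaussian variational problem. That condition, for a Gaussian $q=\gauss(\vm,\vSigma)$ minimizing \Cref{eq:varlearn} with prior precision $\zeta\delta\vI$, reads
\begin{equation}
\vSigma^{-1} = \zeta\Big(\delta \vI + \tfrac{1}{n}\textstyle\sum_{i} \mathbb{E}_{q}\big[\nabla^2_{\vparam} \ell_i(\vparam)\big]\Big).
\end{equation}
For a linear model $f(\vParam,\vx_i)=\vParam^\top\vx_i$, the chain rule gives $\nabla^2_{\vparam}\ell_i = \mathbb{E}_q[\nabla_f^2 c(y_i,f)] \otimes \vx_i\vx_i^\top$, with the appropriate vec ordering. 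The optimal precision is therefore $\zeta\delta\vI$ plus a sum of Kronecker products of output-space matrices with $\vx_i\vx_i^\top$. Any orthonormal $\vP=\vQ_R\otimes\vQ_L$ that diagonalizes all of these terms puts $\vSigma$ into the form of \Cref{eq:evonpost}; this is exactly the mechanism behind \Cref{thm:exactrecovery}.

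\textbf{Binary logistic regression.} Here $\vParam$ is a single column, so $\vR$ is a positive scalar and its condition is trivially satisfied. Since the $\vx_i$ are orthogonal, I complete $\{\vx_i/\|\vx_i\|\}$ to an orthonormal basis and take $\vQ_L$ to be this basis. Every $\vx_i\vx_i^\top$ is then diagonal in $\vQ_L$. Moreover, $\mathbb{E}_q[\sigma'(\vparam^\top\vx_i)]$ is a positive scalar, so $\vSigma^{-1}$ is diagonal in $\vQ_L$ as well.

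It remains to argue that SOAP's $\vL$ can be taken to have this eigenbasis. Per-example gradients are scalar multiples $r_i\vx_i$. A running average of their outer products is $\sum_i a_i \vx_i\vx_i^\top$ with $a_i\ge 0$, and this commutes with every $\vx_i\vx_i^\top$. If the $a_i$ are distinct, its eigenbasis is forced to be $\{\vx_i/\|\vx_i\|\}$ together with an arbitrary basis of their orthogonal complement. On that complement only the isotropic prior acts, so any basis is fine there.

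\textbf{Linear regression.} Take $f(\vParam,\vx)=\vParam^\top\vx$, possibly with multiple outputs, and a squared or Gaussian loss. Then $\nabla^2_f c$ is a constant matrix $\vT$, for instance a noise precision, independent of $q$ and of $i$. Choosing $\vQ_R$ as the eigenbasis of $\vT$ satisfies the $\vR$-condition; when $\vT\propto\vI$, any $\vQ_R$ works, including SOAP's. $\vQ_L$ is handled exactly as in the logistic case. Because the likelihood is Gaussian and the model linear, the exact posterior is itself Gaussian, so it coincides with the optimal full Gaussian and is a SOAP-Bubble.

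\textbf{Main obstacle.} I expect the hard part to be making precise that SOAP's accumulated $\vL$ really has the required eigenbasis, rather than merely that some admissible basis exists. Minibatch gradients $\vG=\sum_{i\in B} r_i\vx_i$ produce cross terms $\vx_i\vx_j^\top$ in $\vG\vG^\top$, and these are not diagonal in $\{\vx_i\}$. Degenerate eigenvalues of $\vL$ add a further source of ambiguity. I would first resolve this by reading \Cref{thm:exactrecovery} as a statement about the family: the corollary only needs that some $\vL$ of the SOAP form, such as per-example or expected-gradient outer products, yields $\vQ_L$. I would then note that in expectation over minibatch sampling the cross terms vanish only when $\mathbb{E}[r_ir_j]$ factorizes to zero. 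If that is too delicate, I would state the corollary for the eigenbasis of the averaged per-example Fisher and remark on degeneracies separately.
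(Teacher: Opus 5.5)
Your route is the paper's: reduce to Theorem~\ref{thm:exactrecovery} through the two conditions of Lemma~\ref{lemma:lemma3}. The $\mathbf{R}$-condition is immediate because $o=1$ (logistic) or $\nabla_f^2 c\propto\vI$ (squared loss), and the $\mathbf{L}$-condition comes from orthogonality. The paper's entire $\mathbf{L}$ step is the computation $\mathbf{L}\vx_i=\mathrm{EMA}[\|\nabla_{f_i}c\|^2]\,\|\vx_i\|^2\vx_i$. It gets this by replacing $\mathbf{L}$ with $\sum_j\mathrm{EMA}[\|\nabla_{f_j}c\|^2]\vx_j\vx_j^\top$, which silently drops from its own expression \eqref{eq:linear_model_expression_of_L} exactly the cross terms $\mathrm{EMA}[\langle\nabla_{f_i}c,\nabla_{f_j}c\rangle]\vx_i\vx_j^\top$, $i\neq j$, that you identify as the main obstacle. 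Your obstacle is genuine, and the paper does not resolve it.

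In the full-batch setting the appendix assumes, it cannot be resolved. With orthogonal data, \eqref{eq:linear_model_expression_of_L} gives $\mathbf{L}\vx_i=\|\vx_i\|^2\sum_j\mathrm{EMA}[\langle\nabla_{f_j}c,\nabla_{f_i}c\rangle]\vx_j$. Since the $\vx_j\vx_k^\top$ are linearly independent, $\mathbf{L}$ commutes with $\vx_i\vx_i^\top$ only if all cross coefficients involving $i$ vanish.

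It is not only the sufficient condition that fails; the conclusion fails too. Take binary logistic regression with $\vparam=(\theta_1,\theta_2)^\top$, $\vx_1=(1,0)^\top$, $\vx_2=(0,2)^\top$, $y_1=y_2=1$.
\begin{itemize}
\item The residuals $r_i=\sigma(\vparam^\top\vx_i)-1$ are negative at every sampled weight. So every $\vG\vG^\top$, and hence $\mathbf{L}$, has off-diagonal entry built from $2r_1r_2>0$, and its eigenbasis is unique up to signs and not axis-aligned.
\item The optimal precision is $\zeta\delta\vI$ plus a positive multiple of $\mathrm{diag}(s_1,4s_2)$, with $s_1=\mathbb{E}_q[\sigma'(\theta_1)]$ and $s_2=\mathbb{E}_q[\sigma'(2\theta_2)]$. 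This is diagonal in a non-axis-aligned basis only if $s_1=4s_2$.
\end{itemize}

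The corollary therefore needs an explicit hypothesis on $\mathbf{L}$, and your fallback is the right repair: assume $\mathbf{L}\in\mathrm{span}\{\vI,\vx_1\vx_1^\top,\ldots,\vx_n\vx_n^\top\}$. The clean sufficient case is batch size one. There $\vG\vG^\top=r_i^2\vx_i\vx_i^\top$ holds at every step, pathwise. You need no argument in expectation over minibatches, and one would not help anyway, since $\mathbf{L}$ is a realized EMA rather than an expectation. Discard the ``expected-gradient outer products'' option, which reintroduces the cross terms. With this hypothesis your argument is complete, and it is precisely what the paper's computation actually establishes.

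Two smaller corrections.
\begin{itemize}
\item The eigenvalues of $\sum_i a_i\vx_i\vx_i^\top$ are $a_i\|\vx_i\|^2$, not $a_i$. You need no distinctness at all: Lemma~\ref{lemma:lemma2} and the proof of the theorem use the existential reading (some eigendecomposition of $\mathbf{L}$ works), which absorbs degeneracies.
\item For non-isotropic $\vT$ you cannot \emph{choose} $\vQ_R$; it must be an eigenbasis of SOAP's $\mathbf{R}$. With orthogonal data, $\mathbf{R}=\sum_i\|\vx_i\|^2\,\mathrm{EMA}[\nabla_{f_i}c\,\nabla_{f_i}c^\top]$ with $\nabla_{f_i}c=\vT(f_i-y_i)$, which in general does not commute with $\vT$. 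Only $\vT\propto\vI$, the paper's case, is covered.
\end{itemize}
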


\section{Experiments}
\label{sec:exp}

\subsection{Illustrative Examples}
\label{sec:exp-toy}
We illustrate the intuition behind EVON on a simple logistic regression problem with two-dimensional features in \Cref{fig:fig1}. IVON uses a diagonal posterior covariance, visualized as an ellipse in the figure. This forces the ellipse to be axis aligned which causes the posterior mean to move far away from the mode. EVON's rotated space aligns well with the shape of the posterior, allowing for a rotated ellipse that captures more of the posterior mass. 

In \Cref{fig:mlp} (left), we run IVON and EVON on one large and one small fully-connected neural network on a simple regression task. Due to the limited capacity of the small neural network, the uncertainty estimates from IVON's diagonal posterior lack diversity. EVON's non-diagonal posterior finds highly diverse solutions in regions where there is little data which highlights the benefits of non-diagonal posterior covariances for neural networks. 
Details for both of these experiments are found in~\Cref{app:toy_hyper}.

\subsection{Bayesian Logistic Regression}
\label{sec:exp-logreg}
To study the expressiveness of EVON's variational family we consider a binary Bayesian logistic regression problem on the USPS digit classification task next. We compare EVON to an exact full Gaussian posterior and show the results in \Cref{fig:binary-logreg}. We find that both the covariance structure and variational bound attained match those of the full Gaussian posterior. This validates our theoretical result of \Cref{thm:theory}. A diagonal posterior used in IVON leads to a worse variational bound and slower convergence. SLANG \citep{mishkin2018slang} can also approximate the full Gaussian posterior when the rank is sufficiently high (see Fig.~1 of \citet{mishkin2018slang} which uses the same experimental setup), but SLANG has so far not been shown to scale to larger problems. 
We also consider a multiclass Bayesian logistic regression problem on the UCI Iris dataset on the right hand side of \Cref{fig:mlp}. There, we compute exact posterior marginals using sampling~\citep{hoffman2014no}. EVON's Gaussian marginals closely match these exact marginals, whereas IVON's marginals are highly concentrated. 
We report details for these experiments in~\Cref{app:logreg_hyper}.

\begin{figure}[t!]
  \centering
  \includegraphics[width=\linewidth]{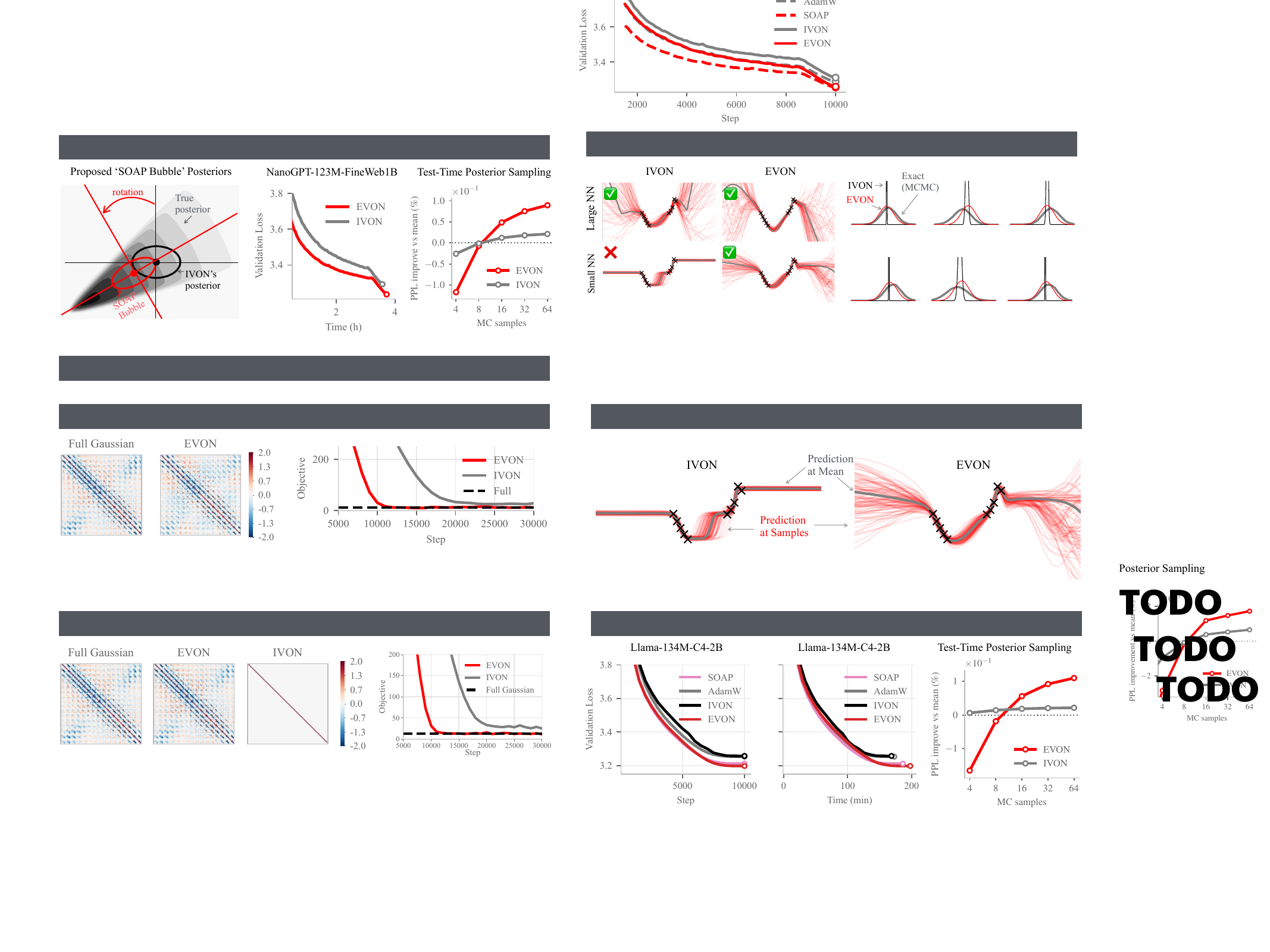}
  \caption{EVON also works well on Llama models, improving significantly over IVON training, similar to how SOAP improves over AdamW. EVON's final validation loss is lower than SOAP's and training with EVON provides a variational posterior on top for free. Averaging next-token probabilities of random models drawn from EVON's SOAP-Bubble before predicting the next token decreases the test-loss more than using IVON's posterior.}
  \label{fig:llama}
\end{figure}

\subsection{Language Model Pre-Training}
\label{sec:exp-lm}
Our goal is to show that EVON can provide good performance on larger scales, for example, for language model pretraining.
To show this, we pretrain language models using NanoGPT~\citep{modded_nanogpt_2024} (125M parameters) in \Cref{fig:fig1} (right) and the LLaMA-based architecture from~\citet{glentis2025minimal} (134M parameters) with EVON, following the same setup as \citet{lin2026understanding} in \Cref{fig:llama}, and compare it to AdamW, IVON, and SOAP. 
We fix training hyperparameters like the token budget, batch size, learning-rate schedule, and data pipeline across all methods and only swap the optimizer.
We report details for these experiments in~\Cref{app:pretrain_hyper}.

In \Cref{fig:fig1}, we compare IVON, and EVON on NanoGPT pre-training on a 1B subset of FineWeb~\citep{fineweb}. EVON demonstrates superior performance by quickly reaching a lower loss in the initial phase of the optimization, and maintains a large gap up until the end of training. We additionally report the perplexity (PPL) improvement relative to each method's posterior mean as a function of the number of Monte Carlo samples used for Bayesian model-averaging on a held-out validation set. For very small sample counts, averaging is not yet beneficial, but with more samples both methods improve over their posterior means. EVON delivers larger gains and scales better with the number of samples, whereas IVON's improvements quickly saturate. This indicates that EVON's non-diagonal posterior provides more useful uncertainty estimates that could be employed further for test-time scaling, where IVON's posterior has already provided improvements~\citep{cong2025improving}.

In \Cref{fig:llama} (left, middle), we compare IVON, SOAP, AdamW and EVON on LLaMA training. We train all models on the same 2B token subset of C4~\citep{2020t5}.
Just as SOAP improves upon AdamW, EVON improves over IVON and reaches an overall lower loss than SOAP, while also providing a variational posterior. The variational posterior is immediately useful to get better results with more inference compute, and \Cref{fig:llama} shows that averaging softmax predictions of posterior ensembles for predicting the next token reduces the overall validation loss. We observe a bigger reduction in test loss for EVON than for IVON, again suggesting that the non-diagonal posterior contains better uncertainty estimates.

\begin{table}[t!]
\centering
\resizebox{\textwidth}{!}{
\begin{tabular}{llcccccccc}
\toprule
Metric & Optimizer & CARS & DTD & EuroSAT & GTSRB & MNIST & RESISC45 & SUN397 & SVHN \\
\midrule
\multirow{6}{*}{Acc. $\uparrow$} & AdamW & $84.03_{\scriptscriptstyle 2.75}$ & $81.20_{\scriptscriptstyle 1.29}$ & $98.32_{\scriptscriptstyle 0.44}$ & $98.97_{\scriptscriptstyle 0.11}$ & $99.51_{\scriptscriptstyle 0.13}$ & $95.61_{\scriptscriptstyle 0.25}$ & $77.85_{\scriptscriptstyle 0.22}$ & $96.76_{\scriptscriptstyle 0.37}$ \\
 & IVON & $87.11_{\scriptscriptstyle 0.39}$ & $80.48_{\scriptscriptstyle 0.43}$ & $98.71_{\scriptscriptstyle 0.12}$ & $98.71_{\scriptscriptstyle 0.11}$ & $99.52_{\scriptscriptstyle 0.12}$ & $95.17_{\scriptscriptstyle 0.32}$ & $77.21_{\scriptscriptstyle 0.11}$ & $96.98_{\scriptscriptstyle 0.34}$ \\
 & ---''--- @32 & $87.32_{\scriptscriptstyle 0.53}$ & $80.60_{\scriptscriptstyle 0.39}$ & $98.78_{\scriptscriptstyle 0.07}$ & $98.74_{\scriptscriptstyle 0.08}$ & $99.53_{\scriptscriptstyle 0.12}$ & $95.23_{\scriptscriptstyle 0.25}$ & $77.27_{\scriptscriptstyle 0.11}$ & $97.01_{\scriptscriptstyle 0.32}$ \\
 & SOAP & $81.39_{\scriptscriptstyle 4.05}$ & $\mathbf{82.98}_{\scriptscriptstyle 0.89}$ & $\mathbf{98.98}_{\scriptscriptstyle 0.08}$ & $98.40_{\scriptscriptstyle 0.14}$ & $99.55_{\scriptscriptstyle 0.09}$ & \textcolor{TolMutedRed}{$ 95.74_{\scriptscriptstyle 0.03} $} & $77.51_{\scriptscriptstyle 0.09}$ & $96.92_{\scriptscriptstyle 0.15}$ \\
\rowcolor{gray!20}\cellcolor{white} & EVON & \textcolor{TolMutedRed}{$ 88.11_{\scriptscriptstyle 0.45} $} & \textcolor{TolMutedRed}{$ 82.15_{\scriptscriptstyle 0.36} $} & $98.72_{\scriptscriptstyle 0.13}$ & \textcolor{TolMutedRed}{$ 99.02_{\scriptscriptstyle 0.13} $} & \textcolor{TolMutedRed}{$ 99.60_{\scriptscriptstyle 0.06} $} & $95.31_{\scriptscriptstyle 0.82}$ & \textcolor{TolMutedRed}{$ 78.84_{\scriptscriptstyle 0.24} $} & \textcolor{TolMutedRed}{$ 97.35_{\scriptscriptstyle 0.19} $} \\
\rowcolor{gray!20}\cellcolor{white} & ---''--- @32 & $\mathbf{88.51}_{\scriptscriptstyle 0.47}$ & $81.99_{\scriptscriptstyle 0.26}$ & \textcolor{TolMutedRed}{$ 98.87_{\scriptscriptstyle 0.11} $} & $\mathbf{99.11}_{\scriptscriptstyle 0.15}$ & $\mathbf{99.63}_{\scriptscriptstyle 0.05}$ & $\mathbf{95.79}_{\scriptscriptstyle 0.22}$ & $\mathbf{79.28}_{\scriptscriptstyle 0.16}$ & $\mathbf{97.41}_{\scriptscriptstyle 0.16}$ \\
\midrule
\multirow{6}{*}{NLL $\downarrow$} & AdamW & $0.564_{\scriptscriptstyle 0.086}$ & $0.813_{\scriptscriptstyle 0.043}$ & $0.055_{\scriptscriptstyle 0.012}$ & \textcolor{TolMutedRed}{$ 0.046_{\scriptscriptstyle 0.010} $} & $0.031_{\scriptscriptstyle 0.005}$ & \textcolor{TolMutedRed}{$ 0.156_{\scriptscriptstyle 0.007} $} & $0.854_{\scriptscriptstyle 0.018}$ & $0.133_{\scriptscriptstyle 0.011}$ \\
 & IVON & $\mathbf{0.409}_{\scriptscriptstyle 0.010}$ & $0.731_{\scriptscriptstyle 0.006}$ & $0.042_{\scriptscriptstyle 0.006}$ & $0.053_{\scriptscriptstyle 0.007}$ & $0.033_{\scriptscriptstyle 0.007}$ & $0.183_{\scriptscriptstyle 0.008}$ & $0.791_{\scriptscriptstyle 0.006}$ & $0.132_{\scriptscriptstyle 0.015}$ \\
 & ---''--- @32 & \textcolor{TolMutedRed}{$ 0.411_{\scriptscriptstyle 0.013} $} & \textcolor{TolMutedRed}{$ 0.670_{\scriptscriptstyle 0.013} $} & $\mathbf{0.034}_{\scriptscriptstyle 0.003}$ & $0.051_{\scriptscriptstyle 0.007}$ & $0.032_{\scriptscriptstyle 0.007}$ & $0.164_{\scriptscriptstyle 0.009}$ & $0.775_{\scriptscriptstyle 0.003}$ & \textcolor{TolMutedRed}{$ 0.129_{\scriptscriptstyle 0.014} $} \\
 & SOAP & $0.675_{\scriptscriptstyle 0.160}$ & $0.696_{\scriptscriptstyle 0.014}$ & $0.042_{\scriptscriptstyle 0.005}$ & $0.070_{\scriptscriptstyle 0.010}$ & $0.035_{\scriptscriptstyle 0.004}$ & $0.161_{\scriptscriptstyle 0.009}$ & $0.832_{\scriptscriptstyle 0.013}$ & $0.141_{\scriptscriptstyle 0.005}$ \\
\rowcolor{gray!20}\cellcolor{white} & EVON & $0.415_{\scriptscriptstyle 0.010}$ & $0.787_{\scriptscriptstyle 0.013}$ & $0.044_{\scriptscriptstyle 0.005}$ & $0.059_{\scriptscriptstyle 0.010}$ & \textcolor{TolMutedRed}{$ 0.027_{\scriptscriptstyle 0.002} $} & $0.180_{\scriptscriptstyle 0.010}$ & \textcolor{TolMutedRed}{$ 0.758_{\scriptscriptstyle 0.004} $} & \textcolor{TolMutedRed}{$ 0.129_{\scriptscriptstyle 0.008} $} \\
\rowcolor{gray!20}\cellcolor{white} & ---''--- @32 & $\mathbf{0.409}_{\scriptscriptstyle 0.008}$ & $\mathbf{0.633}_{\scriptscriptstyle 0.005}$ & \textcolor{TolMutedRed}{$ 0.035_{\scriptscriptstyle 0.002} $} & $\mathbf{0.035}_{\scriptscriptstyle 0.005}$ & $\mathbf{0.026}_{\scriptscriptstyle 0.001}$ & $\mathbf{0.123}_{\scriptscriptstyle 0.007}$ & $\mathbf{0.695}_{\scriptscriptstyle 0.003}$ & $\mathbf{0.117}_{\scriptscriptstyle 0.005}$ \\
\midrule
\multirow{6}{*}{\shortstack{ECE $\downarrow$ \\ ($\times 100$)}} & AdamW & $4.92_{\scriptscriptstyle 0.60}$ & $9.21_{\scriptscriptstyle 0.69}$ & $0.89_{\scriptscriptstyle 0.12}$ & $\mathbf{0.29}_{\scriptscriptstyle 0.06}$ & $1.01_{\scriptscriptstyle 0.02}$ & $2.21_{\scriptscriptstyle 0.09}$ & $9.03_{\scriptscriptstyle 0.57}$ & \textcolor{TolMutedRed}{$ 0.78_{\scriptscriptstyle 0.20} $} \\
 & IVON & $\mathbf{1.67}_{\scriptscriptstyle 0.41}$ & $4.93_{\scriptscriptstyle 0.40}$ & $0.61_{\scriptscriptstyle 0.15}$ & $0.59_{\scriptscriptstyle 0.09}$ & $1.07_{\scriptscriptstyle 0.18}$ & $2.50_{\scriptscriptstyle 0.20}$ & $4.84_{\scriptscriptstyle 0.24}$ & $\mathbf{0.69}_{\scriptscriptstyle 0.15}$ \\
 & ---''--- @32 & $5.76_{\scriptscriptstyle 0.54}$ & $\mathbf{2.47}_{\scriptscriptstyle 0.52}$ & $\mathbf{0.43}_{\scriptscriptstyle 0.08}$ & $0.60_{\scriptscriptstyle 0.11}$ & $1.09_{\scriptscriptstyle 0.18}$ & \textcolor{TolMutedRed}{$ 1.64_{\scriptscriptstyle 0.22} $} & \textcolor{TolMutedRed}{$ 3.75_{\scriptscriptstyle 0.18} $} & $0.79_{\scriptscriptstyle 0.14}$ \\
 & SOAP & $5.46_{\scriptscriptstyle 1.10}$ & $7.04_{\scriptscriptstyle 0.78}$ & $0.71_{\scriptscriptstyle 0.08}$ & $0.75_{\scriptscriptstyle 0.14}$ & $\mathbf{0.85}_{\scriptscriptstyle 0.04}$ & $2.48_{\scriptscriptstyle 0.15}$ & $7.57_{\scriptscriptstyle 0.68}$ & $\mathbf{0.69}_{\scriptscriptstyle 0.08}$ \\
\rowcolor{gray!20}\cellcolor{white} & EVON & \textcolor{TolMutedRed}{$ 3.89_{\scriptscriptstyle 0.22} $} & $8.86_{\scriptscriptstyle 0.26}$ & \textcolor{TolMutedRed}{$ 0.57_{\scriptscriptstyle 0.10} $} & \textcolor{TolMutedRed}{$ 0.58_{\scriptscriptstyle 0.13} $} & \textcolor{TolMutedRed}{$ 0.95_{\scriptscriptstyle 0.03} $} & $2.70_{\scriptscriptstyle 0.18}$ & $6.54_{\scriptscriptstyle 0.20}$ & $0.98_{\scriptscriptstyle 0.12}$ \\
\rowcolor{gray!20}\cellcolor{white} & ---''--- @32 & $6.39_{\scriptscriptstyle 0.61}$ & \textcolor{TolMutedRed}{$ 3.53_{\scriptscriptstyle 0.32} $} & $0.67_{\scriptscriptstyle 0.05}$ & $0.86_{\scriptscriptstyle 0.15}$ & $1.49_{\scriptscriptstyle 0.04}$ & $\mathbf{0.51}_{\scriptscriptstyle 0.17}$ & $\mathbf{1.30}_{\scriptscriptstyle 0.18}$ & $2.73_{\scriptscriptstyle 0.30}$ \\
\midrule
\multirow{6}{*}{Brier $\downarrow$} & AdamW & $0.235_{\scriptscriptstyle 0.038}$ & $0.292_{\scriptscriptstyle 0.016}$ & $0.026_{\scriptscriptstyle 0.006}$ & \textcolor{TolMutedRed}{$ 0.015_{\scriptscriptstyle 0.002} $} & $0.009_{\scriptscriptstyle 0.002}$ & $0.069_{\scriptscriptstyle 0.003}$ & $0.330_{\scriptscriptstyle 0.005}$ & $0.052_{\scriptscriptstyle 0.006}$ \\
 & IVON & $0.188_{\scriptscriptstyle 0.007}$ & $0.291_{\scriptscriptstyle 0.004}$ & $0.020_{\scriptscriptstyle 0.002}$ & $0.018_{\scriptscriptstyle 0.002}$ & \textcolor{TolMutedRed}{$ 0.008_{\scriptscriptstyle 0.002} $} & $0.077_{\scriptscriptstyle 0.004}$ & $0.327_{\scriptscriptstyle 0.001}$ & $0.049_{\scriptscriptstyle 0.006}$ \\
 & ---''--- @32 & $0.191_{\scriptscriptstyle 0.008}$ & $0.282_{\scriptscriptstyle 0.006}$ & \textcolor{TolMutedRed}{$ 0.018_{\scriptscriptstyle 0.001} $} & $0.018_{\scriptscriptstyle 0.002}$ & \textcolor{TolMutedRed}{$ 0.008_{\scriptscriptstyle 0.002} $} & $0.073_{\scriptscriptstyle 0.004}$ & $0.324_{\scriptscriptstyle 0.001}$ & $0.048_{\scriptscriptstyle 0.006}$ \\
 & SOAP & $0.271_{\scriptscriptstyle 0.056}$ & \textcolor{TolMutedRed}{$ 0.263_{\scriptscriptstyle 0.008} $} & $\mathbf{0.017}_{\scriptscriptstyle 0.001}$ & $0.024_{\scriptscriptstyle 0.003}$ & \textcolor{TolMutedRed}{$ 0.008_{\scriptscriptstyle 0.001} $} & \textcolor{TolMutedRed}{$ 0.067_{\scriptscriptstyle 0.002} $} & $0.329_{\scriptscriptstyle 0.002}$ & $0.051_{\scriptscriptstyle 0.002}$ \\
\rowcolor{gray!20}\cellcolor{white} & EVON & $\mathbf{0.177}_{\scriptscriptstyle 0.005}$ & $0.281_{\scriptscriptstyle 0.002}$ & $0.020_{\scriptscriptstyle 0.001}$ & $0.017_{\scriptscriptstyle 0.002}$ & $\mathbf{0.007}_{\scriptscriptstyle 0.001}$ & $0.071_{\scriptscriptstyle 0.003}$ & \textcolor{TolMutedRed}{$ 0.308_{\scriptscriptstyle 0.001} $} & \textcolor{TolMutedRed}{$ 0.045_{\scriptscriptstyle 0.003} $} \\
\rowcolor{gray!20}\cellcolor{white} & ---''--- @32 & \textcolor{TolMutedRed}{$ 0.182_{\scriptscriptstyle 0.005} $} & $\mathbf{0.260}_{\scriptscriptstyle 0.002}$ & \textcolor{TolMutedRed}{$ 0.018_{\scriptscriptstyle 0.000} $} & $\mathbf{0.013}_{\scriptscriptstyle 0.002}$ & $\mathbf{0.007}_{\scriptscriptstyle 0.000}$ & $\mathbf{0.062}_{\scriptscriptstyle 0.003}$ & $\mathbf{0.296}_{\scriptscriptstyle 0.002}$ & $\mathbf{0.043}_{\scriptscriptstyle 0.002}$ \\
\bottomrule
\end{tabular}
}
\vspace{1ex}
\caption{We finetune CLIP ViT-B/16 on 8 classification datasets. EVON is highlighted in \smash{\colorbox{gray!20}{gray}}, the best scores are \textbf{bolded}, and the second best scores are in \textcolor{TolMutedRed}{red}.
When compared to IVON, EVON often provides improvements in terms of Accuracy, Negative Log-Likelihood (NLL), Expected Calibration Error (ECE) and Brier scores. For variational optimizers, we can improve scores by averaging the predictions of 32 model samples (BMA). EVON's gains are often bigger compared to IVON's, suggesting that the estimated posterior is more informative.
\label{tab:clip_finetuning_results}}
\end{table}
\subsection{Fine-Tuning CLIP Models}
\label{sec:clip_finetuning}
Finally, we use EVON to finetune CLIP models~\citep{radford2021learning} on several downstream tasks in the visual domain. We initialize the model with the OpenAI ViT-B/16 backbone pre-trained using the AdamW optimizer. We then train the linear layers of the ViT model (99\% of parameters) on 8 visual classification benchmarks.
We list these benchmarks in~\Cref{app:clip_hyper} along with further experimental details. We evaluate all methods in terms of downstream accuracy, negative log-likelihood (NLL), expected calibration error (ECE, \citet{ece}), and Brier score \citep{brier} and compare EVON to AdamW, IVON, and SOAP.

Results are summarized in \Cref{tab:clip_finetuning_results}. EVON performs on par with the best performing optimizers and sometimes outperforms them, especially in terms of NLL, ECE and Brier score. To test the effectiveness of the estimated posteriors, we perform Bayesian Model Averaging, which we report as EVON@32 in the table (indicating that we use 32 Monte Carlo samples). Same applies to IVON@32.

As shown, variational optimizers provide comparable or better performance, even with pre-trained checkpoints obtained from a non-variational optimizer (AdamW). EVON is especially effective at improving over the best accuracy of the non-variational counterparts. At the same time, when using Bayesian Model Averaging, calibration and uncertainty quantification performance remains strong. Interestingly, we find that across the tested optimizers, the best result for each task is often achieved with 32 Monte Carlo samples for EVON. For IVON, averaging predictions only brings marginal improvements compared to the mean, perhaps due to lack of expressiveness of the diagonal covariances. For both IVON and EVON, we note inconsistencies in the ECE metrics after BMA in some cases; following \cite{baan2022stop}, these deviations may stem from the reliance on hard labels or finite-sample bias. Implementation details can be found in~\Cref{app:clip_hyper}.
\section{Discussion and Limitations}
Scaling variational learning with structured weight uncertainty to large neural networks has traditionally been a hard problem. Our proposed EVON method offers a simple solution, requiring only few modifications to the contemporary second-order deep learning optimizer SOAP. Our results further confirm a fundamental connection between second-order optimization and Gaussian variational Bayesian methods~\citep{OpAr09,KhNi18}. Due to this connection, further advances in second-order optimization are expected to lead to better variational learning methods, and vice versa, variational Bayesian principles can inform the design of better optimizers.

Despite being scalable to large problems, EVON also works surprisingly well on classical variational Bayesian inference problems, and we have provided some theoretical justifications for that. For language model training, EVON can even surpass SOAP's final validation loss while offering additional SOAP-Bubble posteriors for free. These can be used to improve validation loss during test-time, and evaluating their usefulness in various downstream applications such as continual learning~\citep{NgLi18}, model understanding~\citep{nickl2023memory}, language generation~\citep{daheim2025uncertainty} or exploration and data collection in open-ended settings~\citep{silver2025welcome} is a promising avenue for future research. 

In distributed settings, EVON requires a careful implementation to minimize computational overhead compared to Adam. Since EVON is structurally similar to SOAP, this can be addressed by building on top of improvements over SOAP or Shampoo~\citep{shi2023distributed,lin2026understanding}. Like any other variational learning method, EVON also introduces an extra hyperparameter, $\zeta$, from the variational objective that must be tuned carefully. Setting $\zeta$ too large causes the posterior to collapse, while setting it too small leads to unstable training. Developing scaling laws to predict how to optimally choose $\zeta$ for a given model and dataset size remains an important direction for future work.

\section*{Acknowledgements}
We thank Cong Bai for helpful discussions and help with implementing the EVON method. 
This work is supported by the Bayes duality project,
JST CREST Grant Number JPMJCR211. TM~acknowledges the support of JSPS KAKENHI Grant Number 26H02541.  This research work has been funded by the German Federal
Ministry of Research, Technology and Space and the Hessian Ministry of Higher Education, Research, Science and
the Arts within their joint support of the National Research
Center for Applied Cybersecurity ATHENE. 
ARM acknowledges support from Sapienza University of Rome,
under Startup Research project TVT (Task Vector Transfer across heterogeneous models).

\section*{Author Contribution}
Authors: Adrian R. Minut (ARM), Nico Daheim (ND), Marco Miani (MM), Mohammad Emtiyaz Khan (MEK), Wu Lin (WL), Thomas Moellenhoff (TM).

MEK and TM discussed the possibility of variational training in a subspace. This led TM to develop a first version of the EVON algorithm and run some proof-of-concept experiments. With feedback from WL, ND and TM, ARM refined the method and implemented a scalable variant. ARM conducted all the experiments on Llama, nanoGPT and ViT with help from ND, WL and TM. TM did the small-scale illustrative experiments with suggestions from MEK. MM derived the theoretical results, with feedback from TM. All authors contributed to writing the paper, as well as proof-reading.

\clearpage

\bibliography{references}

@string{aaai = {{AAAI} Conference on Artificial Intelligence (AAAI)}}

@string{aistats = {International Conference on Artificial Intelligence and Statistics (AISTATS)}}

@string{cvpr = {IEEE Conference on Computer Vision and Pattern Recognition (CVPR)}}

@string{iclr = {International Conference on Learning Representations (ICLR)}}

@string{icml = {International Conference on Machine Learning (ICML)}}

@string{jmlr = {J. Mach. Learn. Res. (JMLR)}}

@string{emnlp = {Conference on Empirical Methods in Natural Language Processing (EMNLP)}}

@string{nips = {Advances in Neural Information Processing Systems (NeurIPS)}}

@inproceedings{eschenhagen2025purifying,
title={Purifying {S}hampoo: Investigating {S}hampoo's Heuristics by Decomposing its Preconditioner},
author={Runa Eschenhagen and Aaron Defazio and Tsung-Hsien Lee and Richard E. Turner and Hao-Jun Michael Shi},
booktitle=nips, 
year={2025},
url={https://openreview.net/forum?id=kePsKwxvaV}
}

@inproceedings{morwani2024new,
    title={A New Perspective on {S}hampoo's Preconditioner},
    author={Depen Morwani and Itai Shapira and Nikhil Vyas and Eran Malach and Sham M Kakade and Lucas Janson},
    booktitle=iclr, 
    year={2025},
    url={https://openreview.net/forum?id=c6zI3Cp8c6},
}

@InProceedings{gupta18shampoo,
title={{S}hampoo: Preconditioned Stochastic Tensor Optimization},
  author = 	 {Gupta, Vineet and Koren, Tomer and Singer, Yoram},
  booktitle = 	 icml, 
  year = 	 {2018},
}

@article{shi2023distributed,
  title={A Distributed Data-Parallel {P}y{T}orch Implementation of the Distributed {S}hampoo Optimizer for Training Neural Networks At-Scale},
  author={Shi, Hao-Jun Michael and Lee, Tsung-Hsien and Iwasaki, Shintaro and Gallego-Posada, Jose and Li, Zhijing and Rangadurai, Kaushik and Mudigere, Dheevatsa and Rabbat, Michael},
  journal={arXiv preprint arXiv:2309.06497},
  year={2023},
  doi={10.48550/arxiv.2309.06497},
}

@inproceedings{shen2024variational,
  title={Variational learning is effective for large deep networks},
  author={Shen, Yuesong and Daheim, Nico and Cong, Bai and Nickl, Peter and Marconi, Gian Maria and Bazan, Clement and Yokota, Rio and Gurevych, Iryna and Cremers, Daniel and Khan, Mohammad Emtiyaz and M{\"o}llenhoff, Thomas},
  booktitle=icml, 
  year={2024}
}

@inproceedings{ImBa21,
	author = {Immer, Alexander and Bauer, Matthias and Fortuin, Vincent and R{\"a}tsch, Gunnar and Emtiyaz, Khan Mohammad},
	booktitle = icml,
	title = {Scalable marginal likelihood estimation for model selection in deep learning},
	year = {2021}}

@book{Mu12,
	author = {Murphy, Kevin Patrick},
	publisher = {MIT Press},
	title = {Machine learning: a probabilistic perspective},
	year = {2012}}

@inproceedings{LiKh19b,
	author = {Lin, Wu and Khan, Mohammad Emtiyaz and Schmidt, Mark},
	booktitle = icml,
	title = {Fast and simple natural-gradient variational inference with mixture of exponential-family approximations},
	year = {2019}}

@inproceedings{NgLi18,
	author = {Cuong V. Nguyen and Yingzhen Li and Thang D. Bui and Richard E. Turner},
	booktitle = iclr,
	title = {Variational Continual Learning},
	year = {2018}}

@inproceedings{Gr11,
	author = {Alex Graves},
	booktitle = nips,
	title = {Practical Variational Inference for Neural Networks},
	year = {2011}}

@inproceedings{LiSc20,
	author = {Lin, Wu and Schmidt, Mark and Khan, Mohammad Emtiyaz},
	booktitle = icml,
	title = {Handling the Positive-Definite Constraint in the {B}ayesian Learning Rule},
	year = {2020}}

@article{OpAr09,
	author = {Manfred Opper and C{\'{e}}dric Archambeau},
	journal = {Neural Computation},
	number = {3},
	pages = {786--792},
	title = {The Variational {G}aussian Approximation Revisited},
	volume = {21},
	year = {2009}}

@article{OsSw19,
	author = {Osawa, Kazuki and Swaroop, Siddharth and Jain, Anirudh and Eschenhagen, Runa and Turner, Richard E and Yokota, Rio and Khan, Mohammad Emtiyaz},
	journal = nips,
	title = {Practical deep learning with {B}ayesian principles},
	year = {2019}}

@inproceedings{KhNi18,
	author = {Khan, Mohammad Emtiyaz and Nielsen, Didrik and Tangkaratt, Voot and Lin, Wu and Gal, Yarin and Srivastava, Akash},
	booktitle = icml,
	title = {Fast and scalable {B}ayesian deep learning by weight-perturbation in {A}dam},
	year = {2018}}

@inproceedings{ZhSu18,
	author = {Zhang, Guodong and Sun, Shengyang and Duvenaud, David and Grosse, Roger},
	booktitle = icml,
	title = {Noisy natural gradient as variational inference},
	year = {2018}}

@article{KhRu23,
	author = {Khan, Mohammad Emtiyaz and Rue, Haavard},
	journal = jmlr,
	number = {281},
	pages = {1--46},
	title = {The {B}ayesian Learning Rule},
	volume = {24},
	year = {2023}}

@inproceedings{daxberger2021laplace,
  title={Laplace Redux--Effortless {B}ayesian Deep Learning},
  author={Erik Daxberger and Agustinus Kristiadi and Alexander Immer
          and Runa Eschenhagen and Matthias Bauer and Philipp Hennig},
  booktitle=nips,
  year={2021}
}

@article{khan2025knowledge,
  title={Knowledge Adaptation as Posterior Correction},
  author={Khan, Mohammad Emtiyaz},
  journal={arXiv:2506.14262},
  year={2025}
}

@inproceedings{daheim2025uncertainty,
  title={Uncertainty-aware decoding with minimum {B}ayes risk},
  author={Daheim, Nico and Meister, Clara and M{\"o}llenhoff, Thomas and Gurevych, Iryna},
  booktitle=iclr, 
  year={2025}
}

@inproceedings{nickl2023memory,
  title={The memory-perturbation equation: Understanding model's sensitivity to data},
  author={Nickl, Peter and Xu, Lu and Tailor, Dharmesh and M{\"o}llenhoff, Thomas and Khan, Mohammad Emtiyaz},
  booktitle=nips,
  year={2023}
}

@article{cong2025improving,
  title={Improving {LoRA} with variational learning},
  author={Cong, Bai and Daheim, Nico and Shen, Yuesong and Yokota, Rio and Khan, Mohammad Emtiyaz and M{\"o}llenhoff, Thomas},
  journal={arXiv:2506.14280},
  year={2025}
}

@inproceedings{mollenhoff2025federated,
  title={Federated {ADMM} from {B}ayesian Duality},
  author={M{\"o}llenhoff, Thomas and Swaroop, Siddharth and Doshi-Velez, Finale and Khan, Mohammad Emtiyaz},
  booktitle=iclr, 
  year={2025}
}

@inproceedings{bae2018eigenvalue,
  title={Eigenvalue corrected noisy natural gradient},
  author={Bae, Juhan and Zhang, Guodong and Grosse, Roger},
  booktitle={Neural Information Processing Systems (Bayesian Deep Learning Workshop)},
  year={2018}
}

@inproceedings{mishkin2018slang,
  title={{SLANG}: Fast structured covariance approximations for {B}ayesian deep learning with natural gradient},
  author={Mishkin, Aaron and Kunstner, Frederik and Nielsen, Didrik and Schmidt, Mark and Khan, Mohammad Emtiyaz},
  booktitle=nips, 
  year={2018}
}

@InProceedings{pmlr-v139-lin21e,
  title = 	 {Tractable structured natural-gradient descent using local parameterizations},
  author =       {Lin, Wu and Nielsen, Frank and Emtiyaz, Khan Mohammad and Schmidt, Mark},
  booktitle = 	 icml, 
  year = 	 {2021},
}

@inproceedings{vyas2024soap,
    title={{SOAP}: Improving and Stabilizing Shampoo using Adam for Language Modeling},
    author={Nikhil Vyas and Depen Morwani and Rosie Zhao and Itai Shapira and David Brandfonbrener and Lucas Janson and Sham M. Kakade},
    booktitle=iclr,
    year={2025}
}

@inproceedings{jones2024bayesian,
  title={Bayesian online natural gradient ({BONG})},
  author={Jones, Matt and Chang, Peter and Murphy, Kevin},
  booktitle=nips, 
  year={2024}
}

@book{cover1999elements,
  title={Elements of information theory},
  author={Cover, Thomas M and Thomas, Joy A},
  year={2006},
  publisher={John Wiley \& Sons}
}

@inproceedings{fadel2025viking,
  title={{VIKING}: Deep variational inference with stochastic projections},
  author={Fadel, Samuel G and Roy, Hrittik and Kr{\"a}mer, Nicholas and Zainchkovskyy, Yevgen and Syrota, Stas and Mahou, Alejandro Valverde and Ek, Carl Henrik and Hauberg, S{\o}ren},
  booktitle=nips,
  year={2025}
}

@inproceedings{
daheim2024model,
title={Model Merging by Uncertainty-Based Gradient Matching},
author={Nico Daheim and Thomas M{\"o}llenhoff and Edoardo Ponti and Iryna Gurevych and Mohammad Emtiyaz Khan},
booktitle=iclr,
year={2024}
}

@inproceedings{wang2024blob,
  title={Blob: Bayesian low-rank adaptation by backpropagation for large language models},
  author={Wang, Yibin and Shi, Haizhou and Han, Ligong and Metaxas, Dimitris and Wang, Hao},
  booktitle=nips, 
  year={2024}
}

@inproceedings{dhahri2024shaving,
  title={Shaving weights with {O}ccam's razor: {B}ayesian sparsification for neural networks using the marginal likelihood},
  author={Dhahri, Rayen and Immer, Alexander and Charpentier, Betrand and G{\"u}nnemann, Stephan and Fortuin, Vincent},
  booktitle=nips, 
  year={2024}
}

@inproceedings{louizos2016structured,
  title={Structured and efficient variational deep learning with matrix {G}aussian posteriors},
  author={Louizos, Christos and Welling, Max},
  booktitle=icml, 
  year={2016},
}

@inproceedings{van2024noether,
  title={Noether's razor: Learning conserved quantities},
  author={van der Ouderaa, Tycho F and van der Wilk, Mark and De Haan, Pim},
  booktitle=nips, 
  year={2024}
}

@inproceedings{eschenhagen2023kronecker,
  title={Kronecker-factored approximate curvature for modern neural network architectures},
  author={Eschenhagen, Runa and Immer, Alexander and Turner, Richard and Schneider, Frank and Hennig, Philipp},
  booktitle=nips, 
  year={2023}
}

@inproceedings{hong2025better,
  title={Better Hessians Matter: Studying the Impact of Curvature Approximations in Influence Functions},
  author={Hong, Dat Minh and Mlodozeniec, Bruno Kacper and Eschenhagen, Runa and Turner, Richard E},
  booktitle={Mechanistic Interpretability Workshop at NeurIPS 2025},
  year={2025}
}

@article{gan2026neural,
  title={Neural Thickets: Diverse Task Experts Are Dense Around Pretrained Weights},
  author={Gan, Yulu and Isola, Phillip},
  journal={arXiv:2603.12228},
  year={2026}
}

@misc{jordan2024muon,
  title={Muon: An optimizer for hidden layers in neural networks},
  author={Jordan, Keller and Jin, Yuchen and Boza, Vlado and Jiacheng, You and Cesista, Franz and Newhouse, Laker and Bernstein, Jeremy},
  url={https://kellerjordan.github.io/posts/muon},
  year={2024}
}

@inproceedings{lin2026understanding,
title={Understanding and improving {S}hampoo and {SOAP} via {K}ullback-{L}eibler Minimization},
author={Wu Lin and Scott C. Lowe and Felix Dangel and Runa Eschenhagen and Zikun Xu and Roger Baker Grosse},
booktitle=iclr,
year={2026},
url={https://openreview.net/forum?id=pQQuC1nIQq}
}

@inproceedings{carlson2015preconditioned,
  title={Preconditioned spectral descent for deep learning},
  author={Carlson, David E and Collins, Edo and Hsieh, Ya-Ping and Carin, Lawrence and Cevher, Volkan},
  booktitle=nips, 
  year={2015}
}

@inproceedings{kasimbeg2025accelerating,
  title={Accelerating neural network training: An analysis of the {A}lgo{P}erf competition},
  author={Kasimbeg, Priya and Schneider, Frank and Eschenhagen, Runa and Bae, Juhan and Sastry, Chandramouli Shama and Saroufim, Mark and Feng, Boyuan and Wright, Less and Yang, Edward Z and Nado, Zachary and others},
  booktitle=iclr, 
  year={2025}
}

@inproceedings{huggins2020validated,
  title={Validated variational inference via practical posterior error bounds},
  author={Huggins, Jonathan and Kasprzak, Mikolaj and Campbell, Trevor and Broderick, Tamara},
  booktitle=aistats,
  year={2020},
}

@inproceedings{liu2023sophia,
  title={Sophia: A scalable stochastic second-order optimizer for language model pre-training},
  author={Liu, Hong and Li, Zhiyuan and Hall, David and Liang, Percy and Ma, Tengyu},
  booktitle=iclr,
  year={2024}
}

@article{silver2025welcome,
  title={Welcome to the era of experience},
  author={Silver, David and Sutton, Richard S},
  journal={Google AI},
  volume={1},
  pages={11},
  year={2025}
}

@inproceedings{kingma2015variational,
  title={Variational dropout and the local reparameterization trick},
  author={Kingma, Durk P and Salimans, Tim and Welling, Max},
  booktitle=nips, 
  year={2015}
}

@inproceedings{cars,
  author={Krause, Jonathan and Stark, Michael and Deng, Jia and Fei-Fei, Li},
  booktitle={2013 IEEE International Conference on Computer Vision Workshops}, 
  title={3{D} Object Representations for Fine-Grained Categorization}, 
  year={2013},
  month={Dec},
}

@inproceedings{dtd,
  Author    = {M. Cimpoi and S. Maji and I. Kokkinos and S. Mohamed and A. Vedaldi},
  Title     = {Describing Textures in the Wild},
  Booktitle = cvpr,
  Year      = {2014}
}

@article{eurosat,
  title={Euro{SAT}: A novel dataset and deep learning benchmark for land use and land cover classification},
  author={Helber, Patrick and Bischke, Benjamin and Dengel, Andreas and Borth, Damian},
  journal={IEEE Journal of Selected Topics in Applied Earth Observations and Remote Sensing},
  volume={12},
  number={7},
  pages={2217--2226},
  year={2019},
  publisher={IEEE}
}

@article{gtsrb,
title = {Man vs. computer: Benchmarking machine learning algorithms for traffic sign recognition},
journal = {Neural Networks},
volume = {32},
pages = {323-332},
year = {2012},
author = {J. Stallkamp and M. Schlipsing and J. Salmen and C. Igel}
}

@article{resisc45,
  title={Remote Sensing Image Scene Classification: Benchmark and State of the Art},
  volume={105},
  ISSN={1558-2256},
  url={http://dx.doi.org/10.1109/JPROC.2017.2675998},
  DOI={10.1109/jproc.2017.2675998},
  number={10},
  journal={Proceedings of the IEEE},
  publisher={Institute of Electrical and Electronics Engineers (IEEE)},
  author={Cheng, Gong and Han, Junwei and Lu, Xiaoqiang},
  year={2017},
  month={Oct},
  pages={1865-1883}
}

@article{sun397,
  title     = {SUN Database: Exploring a Large Collection of Scene Categories},
  author    = {Jianxiong Xiao and Krista A. Ehinger and James Hays and Antonio Torralba and Aude Oliva},
  year      = 2014,
  journal   = {International Journal of Computer Vision},
  volume    = 119,
  pages     = {3--22},
}

@inproceedings{svhn,
  title={Reading digits in natural images with unsupervised feature learning},
  author={Netzer, Yuval and Wang, Tao and Coates, Adam and Bissacco, Alessandro and Wu, Bo and Ng, Andrew Y},
  booktitle={NIPS workshop on deep learning and unsupervised feature learning},
  volume={2011},
  number={2},
  pages={5},
  year={2011}
}

@ARTICLE{brier,
  author = {{Brier}, Glenn W.},
  title = "{Verification of Forecasts Expressed in Terms of Probability}",
  journal = {Monthly Weather Review},
  year = 1950,
  month = jan,
  volume = {78},
  number = {1},
  pages = {1--3},
  doi = {10.1175/1520-0493(1950)078<0001:VOFEIT>2.0.CO;2},
}

@inproceedings{ece,
  title={Obtaining Well Calibrated Probabilities Using {B}ayesian Binning},
  author={Naeini, Mahdi Pakdaman and Cooper, Gregory and Hauskrecht, Milos},
  booktitle=aaai,
  volume={29},
  number={1},
  year={2015},
}

@inproceedings{fineweb,
  title={The fineweb datasets: Decanting the web for the finest text data at scale},
  author={Penedo, Guilherme and Kydl{\'\i}{\v{c}}ek, Hynek and Lozhkov, Anton and Mitchell, Margaret and Raffel, Colin and Von Werra, Leandro and Wolf, Thomas and others},
  booktitle=nips,
  year={2024}
}

@inproceedings{glentis2025minimal,
    title={A Minimalist Optimizer Design for {LLM} Pretraining},
    author={Athanasios Glentis and Jiaxiang Li and Andi Han and Mingyi Hong},
    booktitle={ES-FoMo III: 3rd Workshop on Efficient Systems for Foundation Models},
    year={2025},
    url={https://openreview.net/forum?id=QBJvaWGj04}
}

@misc{modded_nanogpt_2024,
  author       = {Keller Jordan and Jeremy Bernstein and Brendan Rappazzo and
                  @fernbear.bsky.social and Boza Vlado and You Jiacheng and
                  Franz Cesista and Braden Koszarsky and @Grad62304977},
  title        = {modded-nano{GPT}: Speedrunning the {NanoGPT} baseline},
  year         = {2024},
  url          = {https://github.com/KellerJordan/modded-nanogpt}
}

@article{carpenter2017stan,
  title={Stan: A probabilistic programming language},
  author={Carpenter, Bob and Gelman, Andrew and Hoffman, Matthew D and Lee, Daniel and Goodrich, Ben and Betancourt, Michael and Brubaker, Marcus and Guo, Jiqiang and Li, Peter and Riddell, Allen},
  journal={Journal of Statistical Software},
  volume={76},
  pages={1--32},
  year={2017}
}

@article{hoffman2014no,
  title={The {N}o-{U}-{T}urn {S}ampler: Adaptively setting path lengths in {H}amiltonian {M}onte {C}arlo},
  author={Hoffman, Matthew D and Gelman, Andrew},
  journal={J. Mach. Learn. Res.},
  volume={15},
  number={1},
  pages={1593--1623},
  year={2014}
}

@inproceedings{2020t5,
  author  = {Colin Raffel and Noam Shazeer and Adam Roberts and Katherine Lee and Sharan Narang and Michael Matena and Yanqi Zhou and Wei Li and Peter J. Liu},
  title   = {Exploring the Limits of Transfer Learning with a Unified Text-to-Text Transformer},
  booktitle=jmlr,
  year    = {2020},
  volume  = {21},
  number  = {140},
  pages   = {1-67}
}

@inproceedings{radford2021learning,
  title={Learning transferable visual models from natural language supervision},
  author={Radford, Alec and Kim, Jong Wook and Hallacy, Chris and Ramesh, Aditya and Goh, Gabriel and Agarwal, Sandhini and Sastry, Girish and Askell, Amanda and Mishkin, Pamela and Clark, Jack and others},
  booktitle=icml,
  year={2021}
}

@article{sehnke2010parameter,
  title={Parameter-exploring policy gradients},
  author={Sehnke, Frank and Osendorfer, Christian and R{\"u}ckstie{\ss}, Thomas and Graves, Alex and Peters, Jan and Schmidhuber, J{\"u}rgen},
  journal={Neural Networks},
  volume={23},
  number={4},
  pages={551--559},
  year={2010},
}

@inproceedings{baan2022stop,
    title = "Stop Measuring Calibration When Humans Disagree",
    author = "Baan, Joris  and
      Aziz, Wilker  and
      Plank, Barbara  and
      Fernandez, Raquel",
    editor = "Goldberg, Yoav  and
      Kozareva, Zornitsa  and
      Zhang, Yue",
    booktitle = emnlp,
    month = dec,
    year = "2022",
    address = "Abu Dhabi, United Arab Emirates",
    publisher = "Association for Computational Linguistics",
    url = "https://aclanthology.org/2022.emnlp-main.124/",
    doi = "10.18653/v1/2022.emnlp-main.124",
    pages = "1892--1915"
}

\appendix

\section{Proof of \Cref{thm:reparam}}
\label{app:thm1}
\reparamthm*
\begin{proof}
  The main idea is that any $q \in \mathcal{Q}_{\textup{bubble}}$ can be written as a linear transformation (pushforward) $\vP_\sharp q'$ for a distribution $q' \in \mathcal{Q}_{\textup{diag}}$. To see this, recall that a linear transformation of a Gaussian is another Gaussian, that is, \smash{$\vP_\sharp \gauss(\vm', \vSigma') =  \gauss(\vP \vm', \vP \vSigma' \vP^\top)$}. Since any member of $\mathcal{Q}_{\textup{bubble}}$ is obtained that way, we can rewrite the variational objective in~\Cref{eq:varlearn} over the 
  structured family as a minimization over the pushforward of the diagonal family:
\begin{equation}
  \eqref{eq:varlearn} \,\, = \min_{q' \in \mathcal{Q}_{\textup{diag}}} ~ \zeta \, \mathbb{E}_{\text{$\vparam$} \sim \text{\vP}_\sharp q'}[\ell(\vparam)] + \myKL{\vP_\sharp q'}{\gauss(\vm_0, \vSigma_0)}.
\end{equation}
Using the fact that the pushforward is adjoint to the pullback, we can rewrite the expected loss as $\mathbb{E}_{\text{$\vparam$} \sim \text{\vP}_\sharp q'}[\ell(\vparam)] = \mathbb{E}_{\text{\vw} \sim q'}[\ell(\vP \vw)]$, which gives us the first part of the statement. For the KL-term, we use the fact that the KL-divergence is invariant under invertible transformations $T_\sharp$ of both measures, that is: $\myKL{\vP_\sharp q'}{\gauss(\vm_0, \vSigma_0)} = \myKL{\vT_\sharp \vP_\sharp q'}{\vT_\sharp \gauss(\vm_0, \vSigma_0)}$. This is the equality case in the data-processing inequality, when all information is preserved due to invertibility \citep{cover1999elements}. The result also holds for more general divergences, and for self-contained proof, see for example \citet[Lemma~3.1]{huggins2020validated}. Setting $\vT = \vP^{-1} = \vP^\top$ and noticing that $\vT_\sharp \vP_\sharp = \text{id}_\sharp$ we get, 
\begin{align}
\myKL{\vT_\sharp \vP_\sharp q'}{\vT_\sharp \gauss(\vm_0, \vSigma_0)} &= \myKL{\text{id}_\sharp q'}{\vP^\top_\sharp \gauss(\vm_0, \vSigma_0)} \\
&= \myKL{q'}{\gauss(\vP^\top \vm_0, \vP^\top \vSigma_0 \vP)}, 
\end{align}
which concludes the proof. 
\end{proof}

\section{Derivation of the EVON Algorithm}
\label{app:derivation}
In this appendix, we derive the proposed EVON method and show how it leads to the SOAP-like updates of \Cref{alg:evon}. To derive EVON, we start from the variational online Newton (VON) method in \citet[Eq.~12]{KhRu23} applied to the variational objective in $q(\vw)$, which is \Cref{eq:wobj} in the main text. The VON method directly targets \Cref{eq:wobj}. For a generic Gaussian family $\gauss(\vparam \mid \vm, \text{diag}(\vs)^{-1})$ with diagonal covariance and isotropic Gaussian prior, $\vm_0 = 0$ and $\vSigma_0 = s_0^{-1} \vI$ the method reads: 
\begin{align}
    \vs &\gets (1 - \rho) \vs + \rho \left[ \zeta \mathbb{E}_{q(\vparam)}[\text{diag}(\nabla^2 \ell(\vparam))] + s_0 \vI \right], \label{eq:von1} \\
    \vm &\gets \vm - \rho \text{diag}(\vs)^{-1} \left(\zeta \mathbb{E}_{q(\vparam)}[\nabla \ell(\vparam)] + s_0 \vm \right), \label{eq:von2}
\end{align}
where $\rho > 0$ is a learning rate.

Now, we apply the above VON updates to the reparametrized problem in $\vw$ from \Cref{thm:reparam}, which leads to the following updates:
\begin{align}
  \vs' &\gets (1 - \rho) \vs' + \rho \left(\zeta \mathbb{E}_{q(\text{$\vw$})}[\text{diag}(\vP^\top \nabla^2 \ell(\vP \vw) \vP)] + s_0 \vI \right), \\
  \vm' &\gets \vm' - \rho \, \text{diag(s')}^{-1} \left(\zeta \vP^\top \mathbb{E}_{q(\text{$\vw$})}[\nabla \ell(\vP \vw)] + s_0 \vm' \right).
\end{align}
Here, we used the chain-rule for gradient and Hessian for the $\ell(\vP \vw)$. We will now use Price's theorem~\citep[Theorem~4]{LiKh19b} to write the Hessian in terms of gradients. Also using stochastic gradients, both for minibatches and the expectations, and writing $\vh = (\vs' - s_0) / \zeta$, $s_0 = \zeta \delta$, and using short-hand notation $\vg^\circ$ for the projected gradient, the above can be simplified:
\begin{align}
  \vg &\gets \widehat \nabla \ell(\vP (\vm' + \vepsilon)), \vg^\circ \gets \vP^\top \vg, \quad \vepsilon \sim \gauss(\vepsilon \mid 0, \vsigma^2) \\
  \vh &\gets (1 - \rho) \vh + \rho \vg^\circ \vepsilon / \vsigma^2, \\
  \vm' &\gets \vm' - \rho \, \text{diag}(\vh + \delta)^{-1} \left(\vg^\circ + \delta \vm' \right).
\end{align}
 Multiplying the last line from the left with $\vP$, we can bring the update to the original $\vparam$-space:
\begin{align}
  \vm &\gets \vm - \rho \, \vP \text{diag}(\vh + \delta)^{-1} \left(\vg^\circ + \delta \vP^\top \vm \right).
\end{align}
To arrive at the form shown in \Cref{alg:evon}, we apply the method to neural networks with $L$ layers, where the weights $\vparam = \text{cat}(\text{vec}(\vParam_1), \hdots, \text{vec}(\vParam_L))$ are assumed to be a concatenation of matrix parameters $\vParam_l$ flattened into vectors via $\text{vec}(\cdot)$. The inverse of $\text{vec}(\cdot)$ is denoted by $\text{mat}(\cdot)$. We also assume $\vP$ to be a block-diagonal matrix with blocks $\vP_1, \hdots, \vP_L$ such that $\vP_l \vparam_l = \vQ_L^l \vParam_l (\vQ_R^l)^\top$, that is, $\vP_l = \vQ_R^l \otimes \vQ_L^l$ has Kronecker-factored structure. 

To simplify notation, following \citep{vyas2024soap}, we write the method for a single matrix parameter and drop the index $l$. Capital letters denote matrix version of the parameter that was vectorized before.
\begin{align}
  \vG &\gets \widehat \nabla \ell(\vM + \vQ_L \vE \vQ_R^\top), \quad \vG^\circ \gets \vQ_L^\top \vG \vQ_R ,\quad E_{i,j} \sim \gauss(0, V_{i,j}) \\
  \vH &\gets (1 - \rho) \vH + \rho \vG^\circ \odot \vE / \vV, \\
  \vM &\gets \vM - \rho  \vQ_L \left[ \left( \vG^\circ + \delta \vQ_L^\top \vM \vQ_R \right) / (\vH + \delta)\right] \vQ_R^\top,
\end{align}
where $V_{i,j} = 1 / \zeta (H_{i,j} + \delta)$ are the entries of $\vV = \text{mat}(\vsigma^2)$. Importantly, the matrix $\vV$ is not to be confused with the posterior covariance in the original space. It is the diagonal variance $\vsigma^2$ in the rotated space, reshaped in matrix form. The posterior distribution in the original space for that layer is written in \Cref{eq:evonpost} in the main text. 
The posterior covariance over all parameters is a block-diagonal matrix, where each block has such a Kronecker-factored form.

To arrive at our EVON algorithm in \Cref{alg:evon}, a few more tricks are introduced. We use a different learning rate for the Hessian and name it $\beta_2$ and introduce a gradient momentum $\bar \vG$. This leads to the following updates:
\begin{align}
  \vG &\gets \widehat \nabla \ell(\vM + \vQ_L \vE \vQ_R^\top), \quad \vG^\circ \gets \vQ_L^\top \vG \vQ_R ,\quad E_{i,j} \sim \gauss(0, V_{i,j}) \\
  \widehat \vH &\gets \vG^\circ \odot \vE / \vV, \\
  \bar \vG &\gets \beta_1 \bar \vG + (1 - \beta_1) \vG^\circ, \\
  \vH &\gets \beta_2 \vH + (1 - \beta_2) \widehat \vH + \half (1 - \beta_2)^2 (\vH - \widehat \vH)^2 / (\vH + \delta) , \\
  \vM &\gets \vM - \alpha \vQ_L \left[ \left(\bar \vG + \delta \vQ_L^\top \vM \vQ_R \right) / (\vH + \delta) \right]\vQ_R^\top.
\end{align}
In the Hessian update, we also added the quadratic correction term used in IVON~\citep{shen2024variational}. 
These updates give lines 1--8 in \Cref{alg:evon}, where we also added clipping to stabilize the algorithm in practice.  Lines 9, 10, 11 are identical to SOAP and update the matrix $\vP$ during training.  
\section{Theoretical Guarantees on Full Posterior Recovery}
\label{app:postapprox}
In this section, we prove \Cref{thm:exactrecovery} and \Cref{corr:exactrecoverycorr} from the main paper. In order to do so, we first clarify some notation, we give analytical expressions for some quantities like the Hessian and the matrices $\mathbf{L}$ and $\mathbf{R}$, and we provide three technical Lemmas.

\subsection{Notation and Preliminaries}
We consider a linear model $f : \bbR^{d\times o} \times \bbR^d\rightarrow\bbR^o$ with parameter $\vParam\in \bbR^{d\times o}$ and data $\vx\in\bbR^d$
\begin{equation}
    f(\vParam, \vx)=\vParam^\top \vx 
\end{equation}
for which the Jacobian is
\begin{equation}
    \nabla_{\vparam} f(\vParam, \vx)
    =
    \vI_o \otimes \vx^\top 
    \in\bbR^{o\times do}
\end{equation}
where $\vI_o$ is the identity matrix of size $o$ and we keep the notation from the main paper of $\vparam = \text{vec}(\vParam)$. The number of parameters is equal to the product of data and label dimensionality $P=do$.

Given some cost function $c(y, f(\vParam, \vx))$ that measures the offset between label $\vy$ and model prediction $f(\vParam, \vx)$, the loss evaluated over a dataset $\mathcal{D}=\{(\vx_i,y_i)\}_{i=1,\ldots,n}$ is $\ell(\vparam) = \sum_{i=1}^n c(y_i,f(\vParam, \vx_i))$ and by chain rule its gradient is
\begin{align}
    \nabla_{\vparam} \ell(\vparam)
    & =
    \sum_{i=1}^n
    \nabla_{\vparam} f(\vParam, \vx_i) ^\top
    \cdot
    \nabla_{f_i} c(y_i, f_i)
    \\
    & =
    \sum_{i=1}^n
    (\vI_o \otimes \vx_i) 
    \cdot 
    \nabla_{f_i} c(y_i, f_i)
    =
    \sum_{i=1}^n
    \text{vec}
    \left(
        \vx_i
        \nabla_{f_i} c(y_i, f_i)^\top
    \right)
    \in\bbR^{do}
\end{align}

Thus, the gradient in matrix form is a sum of outer products
\begin{equation}
\label{eq:linear_model_gradient_matrix_form}
    \nabla_{\text{\vParam}} l(\vparam)
    =
    \sum_{i=1}^n
    \vx_i
    \nabla_{f_i} c(y_i,f_i)^\top
    \quad
    \in\bbR^{d\times o}
\end{equation}
where $f_i$ is a shorthand for $f(\vParam,\vx_i)$.

The expected Hessian under a posterior distribution $q(\vparam)$ is equal to
\begin{align}
    \mathbb{E}_{\vparam\sim q}[\nabla^2_{\vparam} \ell(\vparam)]
    & =
    \sum_{i=1}^n
    \nabla_{\vparam} f(\vParam, \vx_i)^\top
    \mathbb{E}_{\vparam\sim q}[\nabla^2_{f_i} c(y_i,f_i)]
    \nabla_{\vparam} f(\vParam, \vx_i) \\
    & =
    \sum_{i=1}^n
    (\vI_o \otimes \vx_i)
    \mathbb{E}_{\vparam\sim q}[\nabla^2_{f_i} c(y_i,f_i)]
    (\vI_o \otimes \vx_i^\top) \\
    & =
    \sum_{i=1}^n
    \mathbb{E}_{\vparam\sim q}[\nabla^2_{f_i} c(y_i,f_i)]
    \otimes 
    (\vx_i \vx_i^\top)
    \quad \in\bbR^{od\times od}
    \label{eq:linear_model_expected_hessian}
\end{align}
that is a sum of Kronecker products.

The matrices $\mathbf{L}$ and $\mathbf{R}$ are the Exponential Moving Averages (EMA) of outer products of the gradient in matrix form, as in \cref{eq:linear_model_gradient_matrix_form}. 
Assuming to use full-batch gradients, $T$ steps of the optimizer will follow some trajectory $\vparam,\ldots,\vparam_T$. The Exponential Moving Average computed iteratively (In Algorithm lines 9 and 10) is equivalent to a geometric average over the trajectories, that is, for any function $g$ the following holds: 
\begin{equation}
    \text{EMA}_{\vparam}[g(\vparam)]
    =
    (1-\beta_3)\sum_{t=0}^{T} \beta^{T-t+1}g(\vparam_t).
\end{equation}
The matrix $\mathbf{L}$ is a weighted sum of outer products of data
\begin{align}
    \mathbf{L}
    & =
    \text{EMA}_{\vparam}
    \left[ 
        \nabla_{\text{\vParam}} \ell(\vparam) 
        \cdot \nabla_{\text{\vParam}} \ell(\vparam)^\top 
    \right] \\
    & =
    \text{EMA}_{\vparam}
    \left[ 
        \sum_{i,j=1}^n
        \vx_i
        \cdot
        \nabla_{f_i} c(y_i,f_i)^\top
        \cdot
        \nabla_{f_j} c(y_j,f_j)
        \cdot
        \vx_j^\top
    \right] \\
    & =
    \sum_{i,j=1}^n
    \underbrace{\text{EMA}_{\vparam}
    \left[ 
    \langle
        \nabla_{f_i} c(y_i,f_i),
        \nabla_{f_j} c(y_j,f_j)
    \rangle
    \right]}_{\in\bbR}
    \underbrace{ \vx_i \vx_j^\top}_{\in\bbR^{d\times d}},
    \label{eq:linear_model_expression_of_L}
\end{align}
where the scalar coefficients that weight the sum are given by the EMA of the scalar product between residuals. Similarly, the matrix $\mathbf{R}$ is a weighted sum of outer products of residuals
\begin{align}
    \mathbf{R}
    & =
    \text{EMA}_{\vparam}
    \left[ 
        \nabla_{\text{\vParam}} \ell(\vparam) ^\top 
        \cdot \nabla_{\text{\vParam}} \ell(\vparam)
    \right] \\
    & =
    \text{EMA}_{\vparam}
    \left[ 
        \sum_{i,j=1}^n
        \nabla_{f_i} c(y_i,f_i)
        \cdot
        \vx_i^\top
        \cdot
        \vx_j
        \cdot
        \nabla_{f_j} c(y_j,f_j)^\top
    \right] \\
    & =
    \sum_{i,j=1}^n
    \underbrace{\langle 
        \vx_i,
        \vx_j
    \rangle}_{\in\bbR}
    \underbrace{\text{EMA}_{\vparam}
    \left[ 
        \nabla_{f_i} c(y_i, f_i)
        \cdot
        \nabla_{f_j} c(y_j, f_j)^\top
    \right]}_{\in\bbR^{o\times o}},
    \label{eq:linear_model_expression_of_R}
\end{align}
where the scalar coefficients that weight the sum are given by the scalar product between data. 

Neglecting the QR approximation for simplicity, the rotated eigenbase is defined as 
\begin{equation}
\label{eq:linear_model_P_definition}
    \vP = \text{Eig}(\mathbf{R}) \otimes \text{Eig}(\mathbf{L}) \in \bbR^{od\times od}
\end{equation}
And EVON's variational family (SOAP-Bubbles) is
\begin{equation}
\mathcal{Q}_{\text{bubble}} = \left\{ \, \gauss(\vparam \mid \vm, \vP \text{diag}(\vsigma^2) \vP^\top) \mid \vm \in \mathbb{R}^P, \vsigma \in \mathbb{R}_+^P \, \right\}.
\end{equation}
Is this flexible enough? We study under which conditions this variational family does contain the optima among multivariate Gaussians. To see this, we first need to prove a few technical Lemmas.

\subsection{Technical Lemmas}
\begin{lemma}
\label{lemma:lemma1}
    Let $q\sim\gauss(\mu,\Sigma)$ be a local minima of the variational problem in \cref{eq:varlearn} among the family of multivariate Gaussians
    , then 
    \begin{equation}
    \label{eq:linear_model_lemma_1}
        q\in \mathcal{Q}_{\text{bubble}}
        \qquad
        \Longleftrightarrow
        \qquad
        \vP^\top \mathbb{E}_{\vparam\sim q}[\nabla^2_{\vparam} l(\vparam)] \vP  \text{ is diagonal}
    \end{equation}
\end{lemma}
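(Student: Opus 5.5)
The plan is to write down the first-order stationarity conditions of the variational objective in \cref{eq:varlearn} over all multivariate Gaussians. Then we read off the covariance of any local minimum and compare it with the definition of $\mathcal{Q}_{\text{bubble}}$.

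First, I will parametrize $q=\gauss(\mu,\Sigma)$ with $\Sigma\succ 0$. I will use the standard Gaussian gradient identities (Bonnet and Price, as in \citet{OpAr09} and \citet[Theorem~4]{LiKh19b}):
\begin{equation*}
\nabla_{\Sigma}\mathbb{E}_{q}[\ell(\vparam)]=\tfrac12\,\mathbb{E}_{q}[\nabla^2_{\vparam}\ell(\vparam)].
\end{equation*}
For the isotropic prior $\gauss(0,s_0^{-1}\vI)$, the KL gradient is
\begin{equation*}
\nabla_\Sigma\mathrm{KL}=\tfrac12\bigl(s_0\vI-\Sigma^{-1}\bigr).
\end{equation*}
Since the interior of the positive definite cone is open, any local minimum must satisfy the fixed-point equation
\begin{equation*}
\Sigma^{-1}=\zeta\,\mathbb{E}_{q}[\nabla^2_{\vparam}\ell(\vparam)]+s_0\vI.
\end{equation*}
The mean condition is not needed for this lemma. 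The mean is unconstrained in $\mathcal{Q}_{\text{bubble}}$ as defined in the appendix, where $\vm\in\bbR^P$ is arbitrary. So membership in $\mathcal{Q}_{\text{bubble}}$ reduces to asking whether $\vP^\top\Sigma\vP$ is diagonal with positive entries.

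Second, I will use that $\vP$ is orthonormal. Conjugating the fixed-point equation gives
\begin{equation*}
\vP^\top\Sigma^{-1}\vP=\zeta\,\vP^\top\mathbb{E}_q[\nabla^2_{\vparam}\ell]\vP+s_0\vI,
\end{equation*}
and moreover $(\vP^\top\Sigma\vP)^{-1}=\vP^\top\Sigma^{-1}\vP$.

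For ($\Leftarrow$): if $\vP^\top\mathbb{E}_q[\nabla^2\ell]\vP$ is diagonal, then $\vP^\top\Sigma^{-1}\vP$ is diagonal. Its entries are positive because $\Sigma\succ0$. Its inverse $\vP^\top\Sigma\vP$ is therefore diagonal with positive entries, which we call $\mathrm{diag}(\vsigma^2)$. This gives $\Sigma=\vP\,\mathrm{diag}(\vsigma^2)\vP^\top$, so $q\in\mathcal{Q}_{\text{bubble}}$.

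For ($\Rightarrow$): if $\Sigma=\vP\,\mathrm{diag}(\vsigma^2)\vP^\top$, then $\vP^\top\Sigma^{-1}\vP=\mathrm{diag}(\vsigma^{-2})$. The fixed-point equation then gives
\begin{equation*}
\vP^\top\mathbb{E}_q[\nabla^2\ell]\vP=\bigl(\mathrm{diag}(\vsigma^{-2})-s_0\vI\bigr)/\zeta,
\end{equation*}
which is diagonal.

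The main obstacle is justifying the stationarity condition rigorously. This requires differentiating under the expectation, so $\ell$ must be twice differentiable with integrable Hessian under Gaussians; this holds for the linear models with smooth costs considered here. It also requires care with the symmetric-matrix gradient convention, since the factor-of-two conventions for symmetric derivatives cancel on both sides. I would state these regularity assumptions explicitly and invoke Price's theorem as already used in \Cref{app:derivation}. Nothing else in the argument depends on the linear-model structure; that structure is only used later, when verifying the diagonality condition via \cref{eq:linear_model_expected_hessian}.
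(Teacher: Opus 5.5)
Your proposal is correct and takes essentially the same route as the paper. Both arguments use the Gaussian stationarity condition $\Sigma^{-1}=\zeta\,\mathbb{E}_q[\nabla^2\ell]+s_0\vI$, note that the mean is unconstrained, and conjugate by the orthonormal $\vP$, relying on the isotropic prior precision staying isotropic. You additionally derive the stationarity condition via Bonnet/Price and check that the diagonal entries are positive, both of which the paper leaves implicit.
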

\begin{proof}
    First, given that $q$ is a local minima, it must satisfy $\Sigma^{-1} = \zeta\mathbb{E}_{\vparam\sim q}[\nabla^2_{\vparam} \ell(\vparam)] + \Sigma_0^{-1}$. 
    The mode of the Gaussians can always be equated. Then $q\in \mathcal{Q}_{\text{bubble}}$ if and only if its covariance can be expressed by the variational family, i.e.\@ if and only if there exists $\vsigma \in \mathbb{R}_+^P$ such that $\Sigma=\vP \text{diag}(\vsigma^2) \vP^\top$. This, in turn, is true if and only if $ \Sigma^{-1} = \vP\text{diag}(\vsigma^{-2}) \vP^\top $, and such $\vsigma$ exists as long as $\vP^\top\Sigma^{-1}\vP$ is diagonal.
    An isotropic prior $\Sigma_0^{-1}=\alpha\vI$ remains isotropic under any change of basis $\vP^\top\Sigma_0^{-1}\vI=\alpha\vI$. Consequently, $\vP^\top\Sigma^{-1}\vP = \vP^\top\left(\zeta\mathbb{E}_{\vparam\sim q}[\nabla^2_{\vparam} \ell(\vparam)] + \Sigma_0^{-1}\right)\vP = \zeta\vP^\top \mathbb{E}_{\vparam\sim q}[\nabla^2_{\vparam} \ell(\vparam)] \vP + \alpha\vI$ is diagonal if and only if $ \vP^\top\mathbb{E}_{\vparam\sim q}[\nabla^2_{\vparam} \ell(\vparam)]\vP$ is diagonal.
\end{proof}

In \cref{eq:linear_model_P_definition} multiple eigendecompositions may exist. For example, for the identity matrix, any basis is a valid set of eigenvectors. Once one is fixed, the condition in the RHS of \cref{eq:linear_model_lemma_1} may not hold. This is quite restrictive, and in fact it makes more sense to ask whether \emph{there exists} an eigendecomposition such that the matrix in the RHS of \cref{eq:linear_model_lemma_1} is diagonal. This existence condition is studied in the next lemma, where $\vM$ plays the role of the expected hessian.

\begin{lemma}
\label{lemma:lemma2}
    Let $\vM$ be a symmetric matrix. Then the following holds 
    \[
    \begin{array}{c}
         \text{There exist an eigendecomposition $\text{Eig}(\mathbf{R})$}  \\
         \text{and an eigendecomposition $\text{Eig}(\mathbf{L})$} \\
         \text{such that $\vP^\top \vM \vP$ is diagonal}
    \end{array}
    \,\Longleftrightarrow\,
    \begin{array}{c}
        \text{$\mathbf{R} \otimes \mathbf{L}$ and $\vM$ are simultaneously} \\
        \text{diagonalizable with $\vU=\vU_1\otimes \vU_2$} \\
        \text{that is both $\vU^\top \vM \vU$ and $\vU^\top (\mathbf{R} \otimes \mathbf{L}) \vU$} \\
        \text{are diagonal}
    \end{array}
    \]
\end{lemma}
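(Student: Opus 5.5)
We prove the two implications separately. Throughout, an ``eigendecomposition'' $\text{Eig}(\mathbf{R})$ means an orthogonal matrix $\vQ_R$ whose columns are eigenvectors of $\mathbf{R}$, so $\vQ_R^\top \mathbf{R}\vQ_R = \vD_R$ is diagonal; similarly $\vQ_L^\top\mathbf{L}\vQ_L=\vD_L$. By \cref{eq:linear_model_P_definition}, $\vP=\vQ_R\otimes\vQ_L$.

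\emph{($\Rightarrow$).} Suppose eigendecompositions $\vQ_R,\vQ_L$ exist with $\vP^\top\vM\vP$ diagonal. Take $\vU_1=\vQ_R$, $\vU_2=\vQ_L$ and $\vU=\vP$. Then $\vU^\top\vM\vU$ is diagonal by assumption. The mixed-product property gives
\begin{equation*}
\vU^\top(\mathbf{R}\otimes\mathbf{L})\vU=(\vQ_R^\top\mathbf{R}\vQ_R)\otimes(\vQ_L^\top\mathbf{L}\vQ_L)=\vD_R\otimes\vD_L,
\end{equation*}
which is diagonal. So $\vU=\vU_1\otimes\vU_2$ simultaneously diagonalizes both matrices.

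\emph{($\Leftarrow$).} Suppose orthogonal $\vU_1\in\bbR^{o\times o}$ and $\vU_2\in\bbR^{d\times d}$ exist with $\vU^\top\vM\vU$ and $(\vU_1^\top\mathbf{R}\vU_1)\otimes(\vU_2^\top\mathbf{L}\vU_2)$ both diagonal. We want to conclude that $\vU_1^\top\mathbf{R}\vU_1$ and $\vU_2^\top\mathbf{L}\vU_2$ are each diagonal. Then $\vU_1$ and $\vU_2$ are legitimate choices of $\text{Eig}(\mathbf{R})$ and $\text{Eig}(\mathbf{L})$, giving $\vP=\vU$ and hence $\vP^\top\vM\vP$ diagonal.

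Write $\vA=\vU_1^\top\mathbf{R}\vU_1$ and $\vB=\vU_2^\top\mathbf{L}\vU_2$. The entry of $\vA\otimes\vB$ at position $((i,k),(j,l))$ is $A_{ij}B_{kl}$. Diagonality means $A_{ij}B_{kl}=0$ whenever $(i,k)\neq(j,l)$.
\begin{itemize}
\item Take $k=l$ and $i\neq j$. This gives $A_{ij}B_{kk}=0$. So if some diagonal entry $B_{kk}\neq 0$, then $\vA$ is diagonal.
\item Symmetrically, if some $A_{ii}\neq0$, then $\vB$ is diagonal.
\end{itemize}
Since $\mathbf{L}$ and $\mathbf{R}$ are EMAs of $\vG\vG^\top$ and $\vG^\top\vG$, they are positive semidefinite. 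For a PSD matrix, $\operatorname{tr}\vB=0$ forces $\vB=0$. So either $\vB=0$, which is already diagonal, or some $B_{kk}>0$; the same holds for $\vA$.

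The remaining case is $\mathbf{L}=0$ (or $\mathbf{R}=0$). Then the Kronecker condition gives no information about the other factor. However, every orthogonal matrix is an eigenbasis of the zero matrix, so we are free to re-choose the corresponding factor. We address this in two steps.

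First, if only $\mathbf{L}=0$, then $\vB=0$ imposes nothing on $\vA$, and we are not guaranteed that $\vA$ is diagonal. For this degenerate case we will either state the lemma under the standing assumption $\mathbf{L},\mathbf{R}\neq0$, which holds generically along any nonstationary trajectory, or interpret ``simultaneously diagonalizable with $\vU=\vU_1\otimes\vU_2$'' as requiring $\vU_1^\top\mathbf{R}\vU_1$ and $\vU_2^\top\mathbf{L}\vU_2$ to be diagonal individually.

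Second, in the nondegenerate case the argument above closes the proof.

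The main obstacle is this step of passing from diagonality of the Kronecker product back to diagonality of its factors. It relies on PSD-ness of $\mathbf{L}$ and $\mathbf{R}$ and on excluding the zero case, so we will make that assumption explicit.
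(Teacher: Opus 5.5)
Your route is the paper's. For $(\Rightarrow)$ you use the mixed-product identity. For $(\Leftarrow)$ you read off that $A=\vU_1^\top\mathbf{R}\vU_1$ and $B=\vU_2^\top\mathbf{L}\vU_2$ are diagonal and take $\text{Eig}(\mathbf{R})=\vU_1$, $\text{Eig}(\mathbf{L})=\vU_2$, so $\vP=\vU$.

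The difference is at the step the paper simply asserts: ``a Kronecker product is diagonal only if both its factors are diagonal.'' You are right that this fails when one factor vanishes ($0\otimes B=0$ for any $B$). Your entrywise argument is the correct justification when both factors are nonzero. Positive semidefiniteness is not even needed there: if $B_{kl}\neq 0$ for some $k\neq l$, then $A_{ij}B_{kl}=0$ for all $i,j$ forces $A=0$.

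However, you do not need the extra hypothesis or reinterpretation you propose. In the paper's setting, $\mathbf{L}$ and $\mathbf{R}$ are EMAs of $\vG\vG^\top$ and $\vG^\top\vG$ over the same gradients with the same positive weights. Hence $\operatorname{tr}\mathbf{L}=\operatorname{tr}\mathbf{R}$, both equal to the EMA of $\|\vG\|_F^2$. With positive semidefiniteness this gives $\mathbf{L}=0$ if and only if $\mathbf{R}=0$. So the case ``exactly one of $A,B$ vanishes'' never occurs, and if both vanish they are trivially diagonal. Your case analysis therefore already closes the proof.

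This coupling is exactly what makes the lemma true; for unrelated $\mathbf{L},\mathbf{R}$ it fails. Take $d=1$, $\mathbf{L}=0$, $\mathbf{R}=\mathrm{diag}(1,2)$, and $\vM$ the all-ones $2\times 2$ matrix. The right-hand side holds with $\vU_1$ an eigenbasis of $\vM$ and $\vU_2=1$. But every admissible $\vP$ is a signed permutation, and $\vP^\top\vM\vP$ is never diagonal.

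Two smaller points. Your sentence about re-choosing the factor of the zero matrix does not help in the single-zero case: re-choosing $\vU_2$ cannot make $\vU_1^\top\mathbf{R}\vU_1$ diagonal, so drop it. Also, you take $\vU_1,\vU_2$ orthogonal without comment. Orthogonality of $\vU$ only gives $\vU_1^\top\vU_1=c\vI$ and $\vU_2^\top\vU_2=c^{-1}\vI$ for some $c>0$, so rescale the factors first. The paper addresses this step, though it concludes $c=1$ directly.
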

\textit{Proof idea:} By defining $\vP=\text{Eig}(\mathbf{R})\otimes\text{Eig}(\mathbf{L})$ we are choosing $\vP$ to be an eigenbasis of $\mathbf{R} \otimes \mathbf{L}$. An orthogonal matrix ($\vP$ in this case) diagonalizes a matrix $\vM$ if it is made by an $\vM$-eigenbasis. Then $\mathbf{R} \otimes \mathbf{L}$ and $\vM$ have the same eigenbasis $\vP$.
\begin{proof}
    First recall some known linear algebra properties: an eigendecomposition is an orthogonal matrix that diagonalizes a matrix, and product of orthogonal matrices is orthogonal. The Kronecker product of orthogonal matrices is orthogonal.\\
    \boxed{\Rightarrow} 
    Let $\text{Eig}(\mathbf{R})$ and $\text{Eig}(\mathbf{L})$ be the eigendecompositions in the LHS of the statement, then we have
    \begin{equation}
        \mathbf{R}=\text{Eig}(\mathbf{R}) \vD_R \text{Eig}(\mathbf{R})^\top
        \qquad
        \mathbf{L}=\text{Eig}(\mathbf{L}) \vD_L \text{Eig}(\mathbf{L})^\top
    \end{equation}
    for two diagonal matrices $\vD_R$ and $\vD_L$. Consequently, we have
    \begin{align}
        \mathbf{R} \otimes \mathbf{L} 
        & =
        (\text{Eig}(\mathbf{R}) \vD_R \text{Eig}(\mathbf{R})^\top) \otimes  (\text{Eig}(\mathbf{L}) \vD_L \text{Eig}(\mathbf{L})^\top) \\
        & =
        \underbrace{
            (\text{Eig}(\mathbf{R})\otimes\text{Eig}(\mathbf{L})) 
        }_{=\vP}
        (\vD_R\otimes \vD_L)
        \underbrace{
            (\text{Eig}(\mathbf{R})\otimes\text{Eig}(\mathbf{L}))^\top
        }_{=\vP^\top}
    \end{align}
    where $\vD_R\otimes \vD_L$ is diagonal and $\vP=\text{Eig}(\mathbf{R})\otimes\text{Eig}(\mathbf{L})$ diagonalizes the matrix $\mathbf{R} \otimes \mathbf{L}$. Since $\vP$ diagonalizes also $M$ by hypothesis, $\mathbf{R} \otimes \mathbf{L}$ and $M$ are simultaneously diagonalizable, by definition. 
    
    \boxed{\Leftarrow} Let $\vU=\vU_1\otimes \vU_2$ be the orthogonal matrix that simultaneously diagonalizes $\mathbf{R} \otimes \mathbf{L}$ and $M$. 
    First let's see that both $\vU_1$ and $\vU_2$ must be orthogonal matrices. $\vU \vU^\top=\vI$ implies that
    \begin{equation}
        (\vU_1^\top \vU_1)
        \otimes \vU_2^\top
        (\vU_2^\top \vU_2)
        =
        (\vU_1^\top\otimes \vU_2^\top)
        (\vU_1\otimes \vU_2)
        =
        (\vU_1\otimes \vU_2)^\top
        (\vU_1\otimes \vU_2)
        =
        \vU^\top \vU
        =
        \vI
        =
        \vI\otimes \vI
    \end{equation}
    which implies that $\vU_1^\top \vU_1=\vI$ and $\vU_2^\top \vU_2=\vI$, so the two matrices are indeed orthogonal.
    
    The fact that $\vU_1\otimes \vU_2$ diagonalizes $\mathbf{R} \otimes \mathbf{L}$, assumed by hypothesis, means that for some diagonal matrix $\vD$ it holds
    \begin{equation}
        \vD
        =
        (\vU_1\otimes \vU_2)^\top
        (\mathbf{R} \otimes \mathbf{L})
        (\vU_1\otimes \vU_2) 
        =
        (\vU_1^\top \mathbf{R} \vU_1)
        \otimes
        (\vU_2^\top \mathbf{L} \vU_2).
    \end{equation}
    A Kronecker product is diagonal only if both its factors are diagonal. Which implies that there exists two diagonal matrices $\vD_1$ and $\vD_2$ such that
    \begin{equation}
        \vD_1 = \vU_1^\top \mathbf{R} \vU_1
        \qquad
        \vD_2 = \vU_2^\top \mathbf{L} \vU_2
        \qquad
        \vD = \vD_1\otimes \vD_2
    \end{equation}
    Then $\vU_1$ is an eigenbasis for $\mathbf{R}$ and similarly $\vU_2$ is an eigenbasis for $\mathbf{L}$. Then
    \begin{equation}
        \vP
        =
        \text{Eig}(\mathbf{R})\otimes\text{Eig}(\mathbf{L})
        =
        \vU_1\otimes \vU_2
        =
        \vU
    \end{equation}
    so $\vP=\vU$ and since $\vU$ diagonalises $\vM$ by hypothesis, also $\vP$ diagonalises $\vM$. Which is equivalent to $\vP^\top \vM \vP$ being diagonal, which concludes the proof.
\end{proof}

We then consider the RHS condition of the previous Lemma in our setting, that is, when the matrix $\vM$ is the expected Hessian $\mathbb{E}_{\vparam\sim q}[\nabla^2_{\vparam} \ell(\vparam)]$. Then we can use the equivalent Hessian formulation from \cref{eq:linear_model_expected_hessian} to prove the following Lemma. Note that this is a sufficient but not necessary condition.

\begin{lemma}
\label{lemma:lemma3}
    The matrices $\mathbf{R}\otimes \mathbf{L}$ and $\mathbb{E}_{\vparam\sim q}[\nabla^2_{\vparam} \ell(\vparam)]$ are simultaneously diagonalizable if
    \begin{itemize}
        \item[(1)] $\mathbf{L}$, $\vx_1 \vx_1^\top$, $\ldots$, $\vx_n \vx_n^\top$ are simultaneously diagonalizable,
        \item[(2)] $\mathbf{R}$, $\mathbb{E}_{\vparam\sim q}[\nabla^2_{f_1} c(y_1,f_1)]$, $\ldots$, $\mathbb{E}_{\vparam\sim q}[\nabla^2_{f_n} c(y_n,f_n)]$ are simultaneously diagonalizable.
    \end{itemize}
    Moreover, the orthogonal matrix $\vU$ that diagonalizes them is also Kronecker factorized.
\end{lemma}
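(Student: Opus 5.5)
The plan is to build the diagonalizing matrix directly from the two families of simultaneously diagonalizable matrices and then check it against both targets, using the Kronecker-sum expression for the expected Hessian in \cref{eq:linear_model_expected_hessian}.

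\textbf{Step 1: choose the two factors.}
By hypothesis (1), there is an orthogonal $\vU_2 \in \bbR^{d\times d}$ such that $\vU_2^\top \mathbf{L} \vU_2$ is diagonal and $\vU_2^\top \vx_i \vx_i^\top \vU_2 = \vD^{x}_i$ is diagonal for every $i$. All these matrices are symmetric, so simultaneous diagonalizability of a commuting symmetric family can always be realized by an orthogonal matrix.

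By hypothesis (2), there is an orthogonal $\vU_1 \in \bbR^{o\times o}$ such that $\vU_1^\top \mathbf{R} \vU_1$ is diagonal and $\vU_1^\top \mathbb{E}_{q}[\nabla^2_{f_i} c(y_i,f_i)]\vU_1 = \vD^{c}_i$ is diagonal for every $i$.

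Set $\vU = \vU_1 \otimes \vU_2$. This ordering matches the convention $\vP = \text{Eig}(\mathbf{R})\otimes \text{Eig}(\mathbf{L})$. The matrix $\vU$ is orthogonal because $(\vU_1\otimes\vU_2)^\top(\vU_1\otimes\vU_2) = \vU_1^\top\vU_1 \otimes \vU_2^\top\vU_2 = \vI$.

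\textbf{Step 2: check the two targets.} Both checks use the mixed-product rule $(A\otimes B)(C\otimes D) = AC\otimes BD$.

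For the preconditioner,
\begin{equation}
\vU^\top(\mathbf{R}\otimes\mathbf{L})\vU = (\vU_1^\top\mathbf{R}\vU_1)\otimes(\vU_2^\top\mathbf{L}\vU_2),
\end{equation}
which is a Kronecker product of diagonal matrices and hence diagonal.

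For the expected Hessian,
\begin{equation}
\vU^\top \mathbb{E}_{q}[\nabla^2_{\vparam}\ell(\vparam)]\vU = \sum_{i=1}^n \vD^{c}_i \otimes \vD^{x}_i ,
\end{equation}
which is a finite sum of diagonal matrices and hence diagonal. This gives simultaneous diagonalization by a Kronecker-factorized orthogonal $\vU$, which is the ``moreover'' part of the statement. It also places us in the right-hand side of \Cref{lemma:lemma2}.

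\textbf{Main obstacle.}
The only delicate point is that the Kronecker ordering in \cref{eq:linear_model_expected_hessian}, namely (output-space factor) $\otimes$ (input-space factor), must agree with the ordering in $\mathbf{R}\otimes\mathbf{L}$ and in $\vP$. This agreement follows from the column-major $\text{vec}$ convention used for $\vparam=\text{vec}(\vParam)$.

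A second, subtler issue is that $\mathbb{E}_q[\nabla^2_{f_i}c]$ depends on $q$ itself. The lemma is a pointwise statement for the given $q$, so no fixed-point argument is needed here. That circularity is deferred to \Cref{thm:exactrecovery}, where this lemma is combined with \Cref{lemma:lemma1} and \Cref{lemma:lemma2}.
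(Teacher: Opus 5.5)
Your proof is correct and is essentially the paper's own argument. The paper likewise takes orthogonal $\vU_2$ from hypothesis (1) and $\vU_1$ from hypothesis (2), sets $\vU=\vU_1\otimes\vU_2$, and uses the mixed-product rule to show that $\vU^\top(\mathbf{R}\otimes\mathbf{L})\vU$ and $\vU^\top\mathbb{E}_{\vparam\sim q}[\nabla^2_{\vparam}\ell(\vparam)]\vU=\sum_{i}\vD_1^{(i)}\otimes\vD_2^{(i)}$ are diagonal. The only difference is that the paper first proves simultaneous diagonalizability separately, via a commutation argument for symmetric matrices reduced to term-wise commutation of the Kronecker factors; your direct construction establishes both claims at once, so that step is redundant.
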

\begin{proof}
    Notice that $\mathbf{R} \otimes \mathbf{L}$ and $\mathbb{E}_{\vparam\sim q}[\nabla^2_{\vparam} \ell(\vparam)]$ are both symmetric, since they are defined as sum and Kronecker products of symmetric matrices. Consequently, if $\mathbf{R} \otimes \mathbf{L}$ and $\mathbb{E}_{\vparam\sim q}[\nabla^2_{\vparam}\ell (\vparam)]$ are simultaneously diagonalizable if and only if they commute, formally
    \begin{equation}
    \label{eq:lemma3_i_dont_know_how_to_call_this3}
        (\mathbf{R} \otimes \mathbf{L})
        \mathbb{E}_{\vparam\sim q}[\nabla^2_{\vparam} \ell(\vparam)]
        =
        \mathbb{E}_{\vparam\sim q}[\nabla^2_{\vparam} \ell(\vparam)]
        (\mathbf{R} \otimes \mathbf{L}).
    \end{equation}
    Following the expression for the expected Hessian in \cref{eq:linear_model_expected_hessian}, this is equivalent to
    \begin{equation}
    \label{eq:lemma3_i_dont_know_how_to_call_this4}
        (\mathbf{R} \otimes \mathbf{L})
        \left(\sum_{i=1}^n
        \mathbb{E}_{\vparam\sim q}[\nabla^2_{f_i} c(y_i,f_i)]
        \otimes
        (\vx_i \vx_i^\top)
        \right)
        =
        \left(\sum_{i=1}^n
        \mathbb{E}_{\vparam\sim q}[\nabla^2_{f_i} c(y_i,f_i)]
        \otimes
        (\vx_i \vx_i^\top)
        \right)
        (\mathbf{R} \otimes \mathbf{L})
    \end{equation}
    which can be equivalently rearranged as
    \begin{equation}
    \label{eq:lemma3_equivalent_formulation}
        \sum_{i=1}^n
        (\mathbf{R} \mathbb{E}_{\vparam\sim q}[\nabla^2_{f_i} c(y_i,f_i)])
        \otimes 
        (\mathbf{L} \vx_i \vx_i^\top)
        =
        \sum_{i=1}^n
        (\mathbb{E}_{\vparam\sim q}[\nabla^2_{f_i} c(y_i,f_i)] \mathbf{R} )
        \otimes 
        (\vx_i \vx_i^\top \mathbf{L} ).
    \end{equation}
    This equality can be satisfied in various ways, and this is the reason why this Lemma only provides a sufficient condition. 
    One possibility (but not the only one) is that the equality holds for each added $i=1,\ldots,n$ in the sum individually, i.e.\@
    \begin{equation}
    \label{eq:lemma3_i_dont_know_how_to_call_this}
        (\mathbf{R} \mathbb{E}_{\vparam\sim q}[\nabla^2_{f_i} c(y_i,f_i)])
        \otimes 
        (\mathbf{L} \vx_i \vx_i^\top)
        =
        (\mathbb{E}_{\vparam\sim q}[\nabla^2_{f_i} c(y_i,f_i)] \mathbf{R} )
        \otimes 
        (\vx_i \vx_i^\top \mathbf{L} )
        \text{ for all } i=1,\ldots,n
    \end{equation}
    In fact if we sum \cref{eq:lemma3_i_dont_know_how_to_call_this} for all $i=1,\ldots,n$ then we obtain exactly \cref{eq:lemma3_equivalent_formulation}. 

    We can equate each factor in the Kronecker products in \cref{eq:lemma3_i_dont_know_how_to_call_this} and obtain 
    \begin{equation}
    \label{eq:lemma3_i_dont_know_how_to_call_this2}
        \mathbf{R} \mathbb{E}_{\vparam\sim q}[\nabla^2_{f_i} c(y_i,f_i)] 
        =
        \mathbb{E}_{\vparam\sim q}[\nabla^2_{f_i} c(y_i,f_i)] \mathbf{R} 
        \quad
        \text{and}
        \quad
        \mathbf{L} \vx_i \vx_i^\top
        =
        \vx_i \vx_i^\top \mathbf{L}
        \text{ for all } i=1,\ldots,n
    \end{equation}
    Note that \cref{eq:lemma3_i_dont_know_how_to_call_this2} implies \cref{eq:lemma3_i_dont_know_how_to_call_this}, which is what we need, but the inverse implication only holds up to scalar factors.

    \cref{eq:lemma3_i_dont_know_how_to_call_this2} contains exactly hypotheses (1) and (2) of the Lemma. Then as we saw \cref{eq:lemma3_i_dont_know_how_to_call_this2} $\Rightarrow$ \cref{eq:lemma3_i_dont_know_how_to_call_this} $\Rightarrow$ \cref{eq:lemma3_equivalent_formulation} $\Leftrightarrow$ \cref{eq:lemma3_i_dont_know_how_to_call_this4} $\Leftrightarrow$ \cref{eq:lemma3_i_dont_know_how_to_call_this3} $\Leftrightarrow$  $\mathbf{R}\otimes \mathbf{L}$ and $\mathbb{E}_{\vparam\sim q}[\nabla^2_{\vparam} \ell(\vparam)]$ are simultaneously diagonalizable.

    This concludes the first part of the proof. We now show that the orthogonal matrix $\vU$ that diagonalizes them is also Kronecker factorized. By hypothesis (1), there exists an orthogonal matrix $\vU_2$ and $n+1$ diagonal matrices $\{\vD_2^{(i)}\}_{i=0,\ldots,n}$ such that
    \begin{equation}
        \vD_2^{(0)}=\vU_2^\top \mathbf{L} \vU_2
        \qquad
        \vD_2^{(i)}=\vU_2^\top
        \vx_i \vx_i^\top
        \vU_2
        \qquad
        \text{for every }i=1,\ldots,n
    \end{equation}
    Similarly, by hypothesis (2), there exists an orthogonal matrix $\vU_1$ and $n+1$ diagonal matrices $\{\vD_1^{(i)}\}_{i=0,\ldots,n}$ such that
    \begin{equation}
        \vD_1^{(0)}=\vU_1^\top \mathbf{R} \vU_1
        \qquad
        \vD_1^{(i)}=\vU_1^\top
        \mathbb{E}_{\vparam\sim q}[\nabla^2_{f_i} c(y_i,f_i)] 
        \vU_1
        \qquad
        \text{for every }i=1,\ldots,n
    \end{equation}
    We argue that $\vU=\vU_1\otimes \vU_2$ is the matrix that diagonalizes both $\mathbf{R}\otimes \mathbf{L}$ and $\mathbb{E}_{\vparam\sim q}[\nabla^2_{\vparam} \ell(\vparam)]$ and concludes the proof. We show these conditions explicitly, respectively
    \begin{align}
        \vU^\top
        (\mathbf{R}\otimes \mathbf{L})
        \vU
        & =
        (\vU_1\otimes \vU_2)^\top
        (\mathbf{R}\otimes \mathbf{L})
        (\vU_1\otimes \vU_2)
        =
        (\vU_1^\top\mathbf{R} \vU_1) \otimes (\vU_2^\top\mathbf{L} \vU_2) \\
        & =
        \vD_1^{(0)} \otimes \vD_2^{(0)} 
        \label{eq:lemma3_diagonal1}
        \\
        \vU^\top
        \mathbb{E}_{\vparam\sim q}[\nabla^2_{\vparam} \ell(\vparam)]
        \vU
        & =
        (\vU_1\otimes \vU_2)^\top
        \left(\sum_{i=1}^n
        \mathbb{E}_{\vparam\sim q}[\nabla^2_{f_i} c(y_i,f_i)]
        \otimes
        (\vx_i \vx_i^\top)
        \right)
        (\vU_1\otimes \vU_2) \\
        & =
        \sum_{i=1}^n
        (\vU_1\otimes \vU_2)^\top
        \left(
        \mathbb{E}_{\vparam\sim q}[\nabla^2_{f_i} c(y_i,f_i)]
        \otimes
        (\vx_i \vx_i^\top)
        \right)
        (\vU_1\otimes \vU_2) \\
        & =
        \sum_{i=1}^n
        \left(
            \vU_1^\top
            \mathbb{E}_{\vparam\sim q}[\nabla^2_{f_i} c(y_i,f_i)]
            \vU_1
        \right)
        \otimes
        \left(
            \vU_2^\top
            (\vx_i \vx_i^\top)
            \vU_2
        \right) \\
        & =
        \sum_{i=1}^n
        \vD_1^{(i)}
        \otimes
        \vD_2^{(i)}
        \label{eq:lemma3_diagonal2}
    \end{align}
    and both \cref{eq:lemma3_diagonal1} and \cref{eq:lemma3_diagonal2} are diagonal matrices.
    This concludes the proof.
\end{proof}
For the general case it is worth mentioning that \cref{eq:lemma3_equivalent_formulation} could be studied with a different approach. In fact, according to \cref{eq:linear_model_expression_of_L}, the matrix $\mathbf{L}$ is a weighted average of the $\vx_i \vx_i^\top$ matrices. Similarly, according to \cref{eq:linear_model_expression_of_R}, the matrix $\mathbf{R}$ is a weighted average of expected gradients outer products. For some specific choices of loss and under some further assumption on the EMA's trajectory, such outer product is equal to the expected Hessian. However, we do not wish to make assumptions on the EMA as they are not easy to guarantee in practice.

\subsection{Proof of the Main Results}
With these Lemmas in place, we can finally prove \cref{thm:exactrecovery}. We restate it below for convenience.
\exactrecovery*
\begin{proof}
    By hypothesis, both conditions (1) and (2) of \Cref{lemma:lemma3} hold, thus implying that $\mathbf{R} \otimes \mathbf{L}$ and $\vM=\mathbb{E}_{\vparam\sim q}[\nabla^2_{\vparam} \ell(\vparam)]$ are simultaneously diagonalizable with some $\vU=\vU_1 \otimes \vU_2$, that is, both $\vU^\top \vM \vU$ and $\vU^\top (\mathbf{R} \otimes \mathbf{L}) \vU$ are diagonal. 

    That is exactly the assumption on the RHS of \Cref{lemma:lemma2}, and we can apply it to deduce that there exists an eigendecomposition $\text{Eig}(\mathbf{R})$ and an eigendecomposition $\text{Eig}(\mathbf{L})$ such that the orthogonal matrix $\vP=\text{Eig}(\mathbf{R})\otimes\text{Eig}(\mathbf{L})$, as in \cref{eq:linear_model_P_definition}, is an eigenbasis for the expected loss Hessian, that is, that the matrix $\vP^\top \mathbb{E}_{\vparam\sim q}[\nabla^2_{\vparam} \ell(\vparam)] \vP$ is diagonal. 

    Finally, once established that the matrix $\vP$ can be an eigenbasis, \Cref{lemma:lemma1} implies that the stationary points of the variational problem in \cref{eq:varlearn} are a SOAP-Bubble. Equivalently, EVON's variational family is flexible enough to contain the optimal multivariate Gaussian posterior. This concludes the proof.
\end{proof}

We now derive some sufficient conditions for each of the requirements of the Theorem, two related to the matrix $\mathbf{R}$ and one related to the matrix $\mathbf{L}$. These allow to prove \Cref{corr:exactrecoverycorr}, which we restate for convenience.

\exactrecoverycorr*
\begin{proof}
    \textbf{Sufficient conditions for $\mathbf{R}$ and $\mathbb{E}_{\vparam\sim q}[\nabla^2_{f_i} c(y_i,f_i)]$}\\
    If we are solving a regression task, it means that $c(y_i,f_i)=\|y_i-f_i\|^2$, or some scalar multiple of it. Thus, its Hessian $\nabla^2_{f_i} c(y_i,f_i)$ and consequently the expectation is equal to the identity, or some scalar multiple of it. The hypothesis of \cref{thm:exactrecovery} then follows since any matrix $\mathbf{R}$ is simultaneously diagonalizable with the identity.

    Alternatively, if we are solving a binary logistic regression task, it means that the output dimensionality $o=1$. Consequently, the expected Hessian matrix is simply a scalar and trivially any (scalar) matrix $\mathbf{R}$ is simultaneously diagonalizable with any scalar.
    
    \textbf{Sufficient conditions for $\mathbf{L}$ and $x_i x_i^\top$}\\
    If the datapoints are orthogonal, that is, $\langle x_j,x_k\rangle = 0$ for $j\not=k$ 
    then, using the expression of $\mathbf{L}$ in \cref{eq:linear_model_expression_of_L}, we have that for every $i=1,\ldots,n$
    \begin{equation}
        \mathbf{L} \vx_i
        =
        \left(
            \sum_{j=1}^n
            \text{EMA}_{\vparam}
            \left[ 
            \langle
                \|\nabla_{f_j} c(y_j,f_j)\|^2
            \rangle
            \right]
            \vx_j \vx_j^\top
        \right)
        \vx_i
        =
        \text{EMA}_{\vparam}
        \left[ 
        \langle
            \|\nabla_{f_i} c(y_i,f_i)\|^2
        \rangle
        \right]
        \vx_i \vx_i^\top \vx_i
    \end{equation}
    and similarly that for every $i=1,\ldots,n$
    \begin{equation}
        \vx_i^\top\mathbf{L}
        =
        \vx_i^\top\left(
            \sum_{j=1}^n
            \text{EMA}_{\vparam}
            \left[ 
            \langle
                \|\nabla_{f_j} c(y_j,f_j)\|^2
            \rangle
            \right]
            \vx_j \vx_j^\top
        \right)
        =
        \text{EMA}_{\vparam}
        \left[ 
        \langle
            \|\nabla_{f_i} c(y_i,f_i)\|^2
        \rangle
        \right]
        \vx_i^\top \vx_i \vx_i^\top
    \end{equation}
    which we can use to prove that the condition holds, as
    \begin{equation}
        \mathbf{L} \vx_i \vx_i^\top
        =
        \text{EMA}_{\vparam}
        \left[ 
        \langle
            \|\nabla_{f_i} c(y_i,f_i)\|^2
        \rangle
        \right]
        \vx_i \vx_i^\top \vx_i \vx_i^\top
        =
        \vx_i \vx_i^\top \mathbf{L} 
        \quad
        \text{ for all } i=1,\ldots,n
    \end{equation}
    Notably, the same equality holds if some datapoints are parallel, rather than orthogonal, that is, if for any $j\not=k$ it holds either $\langle \vx_j, \vx_k\rangle = 0$ or $\vx_j=\vx_k$. Thus the proof also extends to repeated datapoints.
\end{proof}

\clearpage

\begin{figure}[!ht]
    \centering
    \vspace{1ex}
    \begin{tabular}{cc}
        \vspace{0.3cm} \phantomsubcaption\label{fig:hess_hist_plain}
        \textbf{(a)} & \begin{minipage}[c]{0.92\linewidth}
            \includegraphics[width=\linewidth]{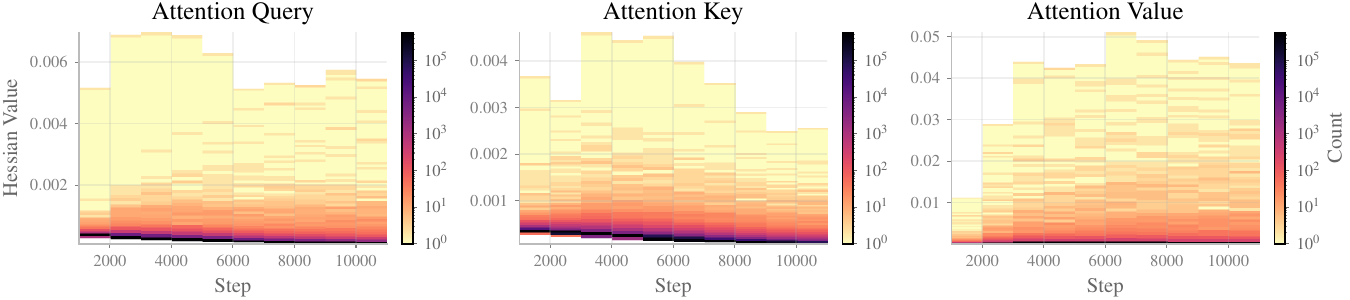}
        \end{minipage} \\ \vspace{0.3cm}
        \phantomsubcaption\label{fig:hess_hist_clip}
        \textbf{(b)} & \begin{minipage}[c]{0.92\linewidth}
            \includegraphics[width=\linewidth]{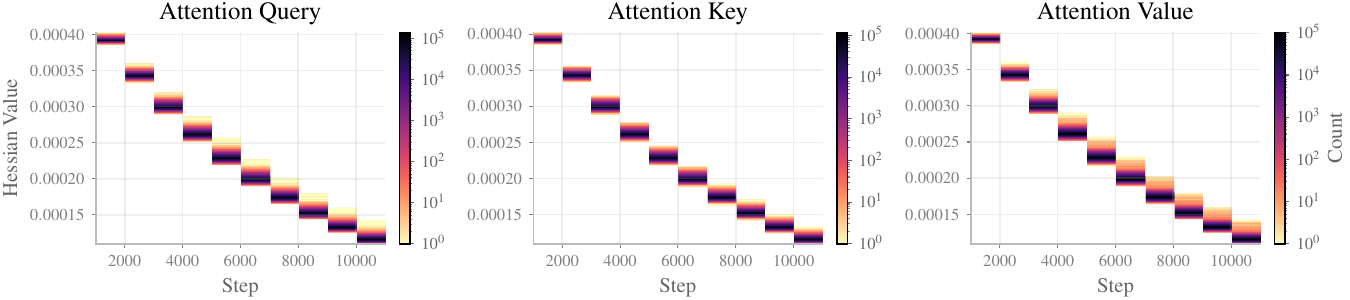}
        \end{minipage} \\ \vspace{0.3cm}
        \phantomsubcaption\label{fig:hess_hist_pg}
        \textbf{(c)} & \begin{minipage}[c]{0.92\linewidth}
            \includegraphics[width=\linewidth]{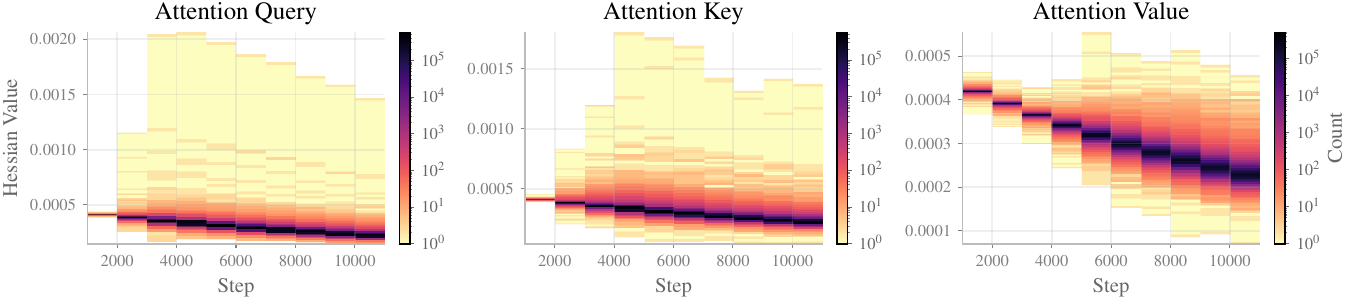}
        \end{minipage} \\ \vspace{0.3cm}
        \phantomsubcaption\label{fig:hess_hist_whitening}
        \textbf{(d)} & \begin{minipage}[c]{0.92\linewidth}
            \includegraphics[width=\linewidth]{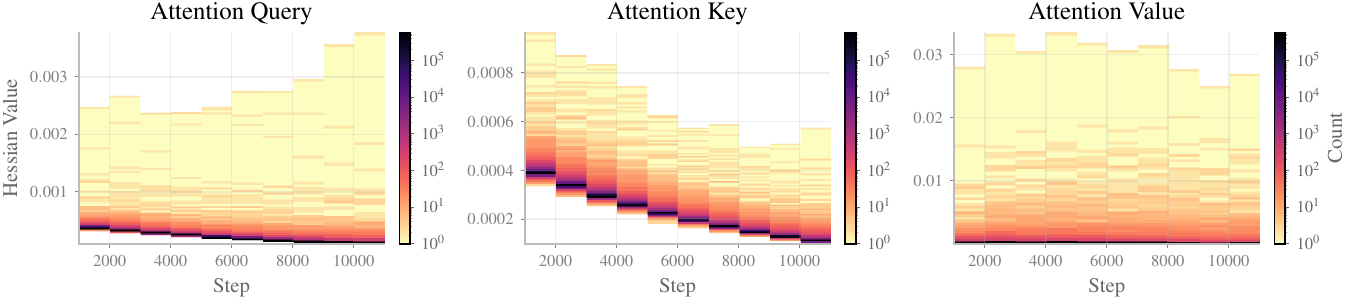}
        \end{minipage}
    \end{tabular}
    \caption{
        \textbf{Evolution of the EVON Hessian distribution in SOAP's eigenspace} (plotted every 1000 training steps) for the attention matrices in the sixth block of NanoGPT.  We compare the baseline (Plain) optimizer (\subref{fig:hess_hist_plain}) as a reference against other variants. Tight Hessian adaptive clipping (clip) (\subref{fig:hess_hist_clip}) results in a much narrower range of smaller Hessian values, which may overestimate the posterior variance. The Phased Gradients (PG) (\subref{fig:hess_hist_pg}) and the whitening (Whiten) (\subref{fig:hess_hist_whitening}) variants produce a distribution much closer to the baseline's, maintaining similar optimization dynamics.
    }
    \label{fig:hess_hist}
\end{figure}

\section{Investigating Potential Improvements to EVON}
\label{app:variants}

We investigate potential improvements to the EVON optimizer on the NanoGPT pretraining task, reporting our results in \Cref{fig:ablations_eval_curve} and \Cref{tab:ablations}. We fix the hyperparameters of the baseline optimizer and enable or disable the following components: Phased Gradients (PG), Hessian adaptive clipping (clip) with tight ratio (1.5) and preconditioned gradient Whitening (Whiten). The table presents the final test loss for each optimizer variant, and the effectiveness of the posterior is studied by performing Bayesian Model Averaging (BMA) with 20 Monte Carlo samples.

\textbf{Phased Gradients (PG).} Phased Gradients alternates every training step between two phases: one phase computes gradients at perturbed parameter values for Hessian estimation, while the other computes gradients at the posterior mean to update the $\mathbf{L}$ and $\mathbf{R}$ preconditioners. Both phases update the posterior mean.

Our findings indicate that Phased Gradients (PG) can accelerate initial convergence when combined with other variants (Whiten and clip). Phased Gradients and tight Hessian adaptive clipping degrade the overall performance of the optimizer in isolation. Conversely, whitening the preconditioned gradient yields notable improvements in stability and final test loss; the posterior estimation is not significantly affected when we compare the final loss of the posterior mean and BMA (\Cref{tab:ablations}).

To further understand these behaviors, we examine the effects of these modifications on the estimated Hessian. \Cref{fig:hess_hist_plain,fig:hess_hist_clip,fig:hess_hist_pg,fig:hess_hist_whitening} show the histograms of the estimated Hessian in Shampoo's eigenspace for each variant. We observe that both Hessian adaptive clipping (\Cref{fig:hess_hist_clip}) and Phased Gradients (\Cref{fig:hess_hist_pg}) result in narrower distributions with much smaller Hessian values. This behavior may indicate an overestimate of the posterior variance. The whitening variant (\Cref{fig:hess_hist_whitening}) preserves a Hessian distribution and training behavior very close to the baseline's.

\begin{figure}[htbp]
    \centering
    \begin{subfigure}[b]{0.49\textwidth}
        \centering
        \includegraphics[width=\linewidth]{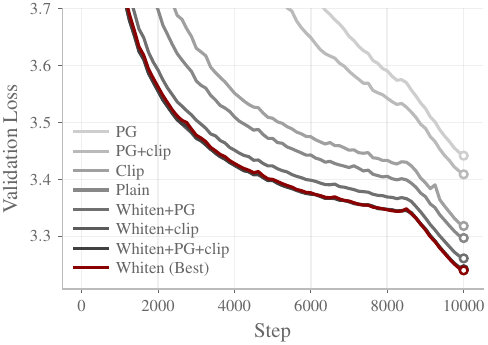}
        \caption{Evaluation loss curves.}
        \label{fig:ablations_eval_curve_sub}
    \end{subfigure}\hfill
    \begin{subfigure}[b]{0.49\textwidth}
        \centering
        \includegraphics[width=\linewidth]{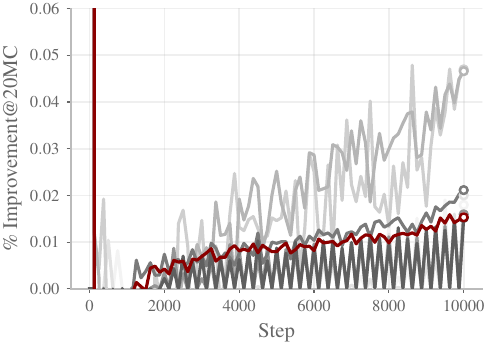}
        \caption{BMA loss improvement.}
        \label{fig:loss_mc_improvement_sub}
    \end{subfigure}
    \caption{
        Evaluation loss curve and BMA test loss improvement for different EVON variants during NanoGPT pretraining. (a) EVON with Whitening (Whiten) achieves the lowest test loss, demonstrating the effectiveness of whitening the preconditioned gradient. In contrast, variants utilizing only Phased Gradients (PG) and Hessian adaptive clipping (clip) underperform both the baseline (Plain) and EVON with Whitening. (b) Percentage improvement in test loss when using BMA with 20 Monte Carlo samples compared to the posterior mean.}
    \label{fig:ablations_eval_curve}
\end{figure}

\begin{table}
    \centering
    \begin{minipage}{0.55\textwidth}
        \begin{tabular}{lcc}
\toprule
Config & Loss@mean & Loss@20MC \\
\midrule
Whiten & 3.241 & 3.240 (\textcolor{TolMutedGreen}{-0.015\%}) \\
Whiten+PG+clip & 3.242 & 3.241 (\textcolor{TolMutedGreen}{-0.015\%}) \\
Whiten+clip & 3.243 & 3.242 (\textcolor{TolMutedGreen}{-0.021\%}) \\
Whiten+PG & 3.262 & 3.261 (\textcolor{TolMutedGreen}{-0.016\%}) \\
Plain & 3.299 & 3.297 (\textcolor{TolMutedGreen}{-0.047\%}) \\
Clip & 3.320 & 3.318 (\textcolor{TolMutedGreen}{-0.047\%}) \\
PG+clip & 3.410 & 3.409 (\textcolor{TolMutedGreen}{-0.020\%}) \\
PG & 3.442 & 3.442 (\textcolor{TolMutedGreen}{-0.018\%}) \\
\bottomrule
\end{tabular}

    \end{minipage}\hfill
    \begin{minipage}{0.4\textwidth}
        \caption{ 
            Performance evaluation of EVON variants in the NanoGPT pretraining task. Results include the final test loss, measuring both the loss of the posterior mean, and the loss at 20 Monte Carlo samples after Bayesian Model Averaging (BMA). We highlight the relative change over the mean in \textcolor{TolMutedGreen}{green}. 
        }\label{tab:ablations}
    \end{minipage}
\end{table}

\clearpage
\section{Details on the Experiments and Hyperparameters}
\label{app:hyper}
To ensure reproducibility, we report all hyperparameters for each dataset in 
\Cref{tab:clip_finetuning_sweep_hyperparameters}. We will release EVON as a PyTorch 
optimizer package and the code to reproduce all figures and tables.

\begin{table}[t!]
\centering
\caption{Hyperparameters for the illustrative examples and logistic-regression experiments.} \vspace{1ex}
\label{tab:small_scale_hparams}
\small
\setlength{\tabcolsep}{2.6pt}
\renewcommand{\arraystretch}{1.15}
\begin{tabular}{@{}llccccccc@{}}
\toprule
\textbf{Experiment}
& \textbf{Method}
& ${\zeta}$
& ${\alpha}$
& ${h_0}$
& ${\delta}$
& ${\beta_1}$
& ${\beta_2}$
& {clip-radius} \\
\midrule

\multicolumn{9}{@{}l}{\textit{Neural network regression, \Cref{fig:mlp}(a)}} \\
Small MLP
& EVON
& $2{\times}10^{5} \to 10^4$
& $10$
& $5{\times}10^{-2}$
& $10^{-4}$
& $0.9$
& $0.999$
& $10^{-2}$ \\

Small MLP
& IVON
& $10^{4}$
& $1$
& $10^{-2}$
& $10^{-5}$
& $0.9$
& $0.9995$
& $10^{-1}$ \\

Large MLP
& EVON
& $10^{4}$
& $10$
& $5{\times}10^{-2}$
& $10^{-4}$
& $0.9$
& $0.999$
& $10^{-2}$ \\

Large MLP
& IVON
& $10^{4}$
& $1$
& $10^{-2}$
& $10^{-5}$
& $0.9$
& $0.9995$
& $10^{-1}$ \\
\midrule
\multicolumn{9}{@{}l}{\textit{UCI Iris multiclass logistic regression, \Cref{fig:mlp}(b)}} \\
LogReg
& EVON
& $n$
& $2$
& $10^{-2}$
& $10^{-4}/n$
& $0.9$
& $0.99985$
& $10^{-1}$ \\

LogReg
& IVON
& $n$
& $1$
& $5{\times}10^{-2}$
& $10^{-4}/n$
& $0.9$
& $0.9995$
& $10^{-1}$ \\
\midrule
\multicolumn{9}{@{}l}{\textit{USPS $3$vs$5$ binary logistic regression, \Cref{fig:binary-logreg}}} \\
LogReg
& EVON
& $n$
& $1$
& $10^{-1}$
& $0.01/n$
& $0.9$
& $0.999$
& $10^{-1}$ \\

LogReg
& IVON
& $n$
& $1$
& $5{\times}10^{-2}$
& $0.01/n$
& $0.9$
& $0.9995$
& $10^{-1}$ \\

\bottomrule
\end{tabular}
\end{table}

\subsection{Illustrative Examples}
\label{app:toy_hyper}
\paragraph{Illustrative experiment in \Cref{fig:fig1}.} The experiment uses a linearly separable binary classification dataset with 2D features, from the Bayesian logistic regression chapter of \citet{Mu12}. Both EVON and IVON are run with small learning rate for many steps to ensure convergence to their respective exact solution of the variational objective, which are shown as the ellipses in the figure.

\paragraph{Neural network regression experiments in \Cref{fig:mlp} (a).} We follow the setup from \citet[Figure~5]{fadel2025viking}, where our small MLP has the same 1-8-8-1 architecture with $\tanh$-activations. The large MLP has an 1-64-64-64-1 architecture with ELU activations. Both methods train for 10000 optimization steps using full-batch gradients, as the dataset only consists of 10 data points. Both EVON and IVON use elementwise clipping, EVON does not use any Hessian clipping. The hyperparameters are in \Cref{tab:small_scale_hparams}.

\subsection{Bayesian Logistic Regression}
\label{app:logreg_hyper}
\paragraph{Experiment in \Cref{fig:binary-logreg}.} We follow the setup of \citet[Fig.~1]{mishkin2018slang}, 
which is binary logistic regression on the USPS dataset, considering the 3vs5 classification case. 
We train for 30000 steps using full-batch gradients. Both IVON and EVON use elementwise clipping, 
EVON does not use any Hessian clipping. The hyperparameters are in \Cref{tab:small_scale_hparams}.

\paragraph{Experiment in \Cref{fig:mlp} (b).} The experiment is multiclass logistic regression on the UCI Iris dataset. The hyperparameters are in \Cref{tab:small_scale_hparams}. For EVON, the eigenbasis is updated every $25$ steps and no Hessian clipping is used. We use Stan \citep{carpenter2017stan} via the PyStan package to compute the exact posterior marginals via Hamiltonian Markov-Chain Monte Carlo (HMC) and the No-U-Turn Sampler (NUTS)~\citep{hoffman2014no}.  

\begin{table}[t!]
\centering
\caption{Hyperparameters for NanoGPT and Llama.
EVON uses spectral clipping ($\gamma$ is the adaptive Hessian clipping parameter). 
IVON uses element-wise clipping with the listed radius.} \vspace{1ex}
\label{tab:evon_ivon_hparams}
\resizebox{\columnwidth}{!}{%
\begin{tabular}{lcccccccc}
\toprule
\textbf{Setup} 
& ${\zeta}$
& ${h_0}$
& ${\alpha}$
& ${\beta_1}$
& ${1-\beta_2}$
& ${\delta}$
& ${\beta_3}$
& {clipping} \\
\midrule
EVON (NanoGPT)
& $5.27{\times}10^{6}$
& $5.82{\times}10^{-1}$
& $1.72{\times}10^{-2}$
& $0.943$
& $9.12{\times}10^{-4}$
& $1{\times}10^{-6}$
& $0.995$
& $\gamma=1.5$ \\
IVON (NanoGPT)
& $4.66{\times}10^{7}$
& $1.80{\times}10^{-2}$
& $1.42{\times}10^{2}$
& $0.890$
& $1.23{\times}10^{-4}$
& $1{\times}10^{-6}$
& --
&  $10^{-4}$ \\

EVON (Llama)
& $7.61{\times}10^{7}$
& $9.31{\times}10^{-1}$
& $1.52{\times}10^{-2}$
& $0.881$
& $3.46{\times}10^{-4}$
& $8.83{\times}10^{-7}$
& $0.995$
& $\gamma=1.5$ \\

IVON (Llama)
& $5.06{\times}10^{8}$
& $1.95{\times}10^{-3}$
& $1.18{\times}10^{3}$
& $0.826$
& $7.29{\times}10^{-5}$
& $3.93{\times}10^{-7}$
& --
&  $2.49{\times}10^{-4}$ \\
\bottomrule
\end{tabular}%
}
\end{table}
\subsection{Language Model Pretraining}
\label{app:pretrain_hyper}
The experimental setup for NanoGPT~\citep{modded_nanogpt_2024} and Llama~\citep{glentis2025minimal} follows the experiments shown in \citet[Fig.~1]{lin2026understanding}, who provide tuned baselines and hyperparameters for AdamW and SOAP. All methods are run on a single H100 GPU with batch size 512. For NanoGPT, a linear warmup, constant step size and linear cooldown is used. Llama has linear warmup with cosine decay. Both SOAP and EVON update the preconditioner every 10 steps using QR decomposition. For both experiments, EVON and IVON's hyperparameters were tuned using equal budget of 500 runs in a random search over all hyperparameters. All optimizers use AdamW for the embedding layer and 1D tensors, where AdamW uses the same hyperparameters as \citet{lin2026understanding}. The best hyperparameters (wrt final validation loss) found by the random search are summarized in \Cref{tab:evon_ivon_hparams}.

\subsection{CLIP Finetuning}
\label{app:clip_hyper}
We train CLIP models on CARS \citep{cars}, DTD (Describable Textures Dataset) \citep{dtd}, EuroSAT \citep{eurosat}, GTSRB (German Traffic Sign Recognition Benchmark) \citep{gtsrb}, MNIST, RESISC45 \citep{resisc45}, SUN397 \citep{sun397}, and SVHN \citep{svhn}
For each dataset, we tune the hyperparameters of the optimizers using a random search over the parameter ranges defined in \Cref{tab:clip_finetuning_sweep_hyperparameters}, running 50 iterations for each optimizer and dataset. We then test the best hyperparameters using 5 different random seeds. We set early stopping patience to 3 epochs for all optimizers, monitoring the validation loss. In \Cref{tab:clip_best_hparams_hyperparams} we report the best hyperparameter values found for each optimizer and dataset. 

\begin{table}[ht]
\centering
\begin{tabular}{lll>{{\raggedright\arraybackslash}}p{5.2cm}}
\toprule
Optimizer & Hyperparameter & Distribution & Range / Values \\
\midrule
 & Learning rate $\alpha$ & Log-uniform & $[10^{-6},\ 5 \times 10^{-4}]$ \\
\multirow{-2}{*}{AdamW} & Weight decay $\lambda$ & Log-uniform & $[10^{-4},\ 10^{-1}]$ \\
\midrule
 & Learning rate $\alpha$ & Log-uniform & $[10^{-4},\ 10^{-1}]$ \\
 & Weight decay $\delta$ & Log-uniform & $[10^{-6},\ 10^{-3}]$ \\
\multirow{-3}{*}{SGD} & Momentum $\mu$ & Uniform & $[0.8,\ 0.99]$ \\
\midrule
 & Learning rate $\alpha$ & Log-uniform & $[10^{-4},\ 5\times10^{-2}]$ \\
 & Weight decay $\delta$ & Log-uniform & $[10^{-7},\ 10^{-2}]$ \\
 & $\zeta$ & Grid & $\{10^7,\ 5{\times}10^7,\ 10^8,\ 5{\times}10^8,\ 10^9,\ 10^{10}\}$ \\
 & Hessian init $h_0$ & Grid & $\{10^{-3},\ 10^{-2},\ 10^{-1}\}$ \\
 & $\beta_1$ & Grid & $\{0.8,\ 0.9,\ 0.95,\ 0.99\}$ \\
\multirow{-6}{*}{IVON} & $1 - \beta_2$ & Grid & $\{10^{-3},\ 5{\times}10^{-4},\ 10^{-4},\ 10^{-5}\}$ \\
\midrule
 & Learning rate $\alpha$ & Log-uniform & $[10^{-6},\ 5\times10^{-4}]$ \\
 & Weight decay $\delta$ & Log-uniform & $[10^{-4},\ 2\times10^{-1}]$ \\
 & $(\beta_1,\,1-\beta_2)$ & Grid & $\{0.85,0.9,0.95\}\times\{5{\times}10^{-2},10^{-2}\}$ (6 pairs) \\
\multirow{-4}{*}{SOAP} & $\beta$ for $\vL$ and $\vR$ & Grid & $\{0.9,\ 0.95,\ 0.99\}$ \\
\midrule
 & Learning rate $\alpha$ & Log-uniform & $[10^{-4},\ 10^{-1}]$ \\
 & Weight decay $\delta$ & Log-uniform & $[10^{-7},\ 10^{-2}]$ \\
 & $\zeta$ & Grid & $\{10^7,\ 5{\times}10^7,\ 10^8,\ 5{\times}10^8,\ 10^9,\ 10^{10}\}$ \\
 & Hessian init $h_0$ & Grid & $\{10^{-3},\ 10^{-2},\ 10^{-1}\}$ \\
\multirow{-5}{*}{EVON} & $(\beta_1,\,1-\beta_2)$ & Grid & $\{0.8,0.9,0.95,0.99\}\times\{10^{-3},5{\times}10^{-4},10^{-4},10^{-5}\}$ (16 pairs) \\
\bottomrule
\end{tabular}

\vspace{1ex}
\caption{\textbf{Hyperparameter sweep ranges for CLIP finetuning.} For continuous hyperparameters we use a log-uniform (learning rate, weight decay) or uniform (momentum) distribution. For discrete hyperparameters we search over the listed grid values.
\label{tab:clip_finetuning_sweep_hyperparameters}}
\end{table}

\begin{table}[ht]
\centering
\begin{tabular}{llcccc}
\toprule
Optimizer & Hyperparameter & CARS & DTD & EuroSAT & GTSRB \\
\midrule
 & $\alpha$ & $3.04\times 10^{-5}$ & $1.7\times 10^{-5}$ & $7.04\times 10^{-6}$ & $9.93\times 10^{-6}$ \\
\multirow{-2}{*}{AdamW} & $\delta$ & $8.91\times 10^{-3}$ & 0.0144 & $2.56\times 10^{-3}$ & $1.19\times 10^{-4}$ \\
\cmidrule{2-6}
 & $\alpha$ & $3.54\times 10^{-4}$ & $1.12\times 10^{-4}$ & $1.43\times 10^{-3}$ & $2.38\times 10^{-3}$ \\
 & $\delta$ & $1.01\times 10^{-3}$ & $3.14\times 10^{-6}$ & $6.89\times 10^{-6}$ & $5.15\times 10^{-4}$ \\
 & $\zeta$ & $5\times 10^{7}$ & $10^{8}$ & $5\times 10^{8}$ & $10^{9}$ \\
 & Hessian init $h_0$ & 0.01 & $10^{-3}$ & 0.1 & 0.01 \\
 & $\beta_1$ & 0.99 & 0.9 & 0.9 & 0.8 \\
\multirow{-6}{*}{IVON} & $1 - \beta_2$ & $10^{-4}$ & $10^{-4}$ & $10^{-4}$ & $10^{-4}$ \\
\cmidrule{2-6}
 & $\alpha$ & $7.43\times 10^{-5}$ & $3.71\times 10^{-5}$ & $9.43\times 10^{-6}$ & $1.56\times 10^{-5}$ \\
 & $\delta$ & 0.04575 & $1.67\times 10^{-4}$ & $4.81\times 10^{-3}$ & $2.49\times 10^{-4}$ \\
 & $(\beta_1, 1-\beta_2)$ & $(0.9,\, 0.01)$ & $(0.85,\, 0.01)$ & $(0.95,\, 0.01)$ & $(0.95,\, 0.01)$ \\
\multirow{-4}{*}{SOAP} & $\beta$ for $\vL$ and $\vR$ & 0.9 & 0.95 & 0.99 & 0.99 \\
\midrule
\rowcolor{gray!20} \cellcolor{white} & \cellcolor{white} $\alpha$ & 0.06263 & $3.74\times 10^{-3}$ & 0.07636 & 0.01957 \\
\rowcolor{gray!20} \cellcolor{white} & \cellcolor{white} $\delta$ & $1.87\times 10^{-7}$ & $1.13\times 10^{-4}$ & $5.98\times 10^{-4}$ & $1.4\times 10^{-6}$ \\
\rowcolor{gray!20} \cellcolor{white} & \cellcolor{white} $\zeta$ & $10^{7}$ & $10^{8}$ & $5\times 10^{7}$ & $10^{7}$ \\
\rowcolor{gray!20} \cellcolor{white} & \cellcolor{white} Hessian init $h_0$ & 0.01 & $5\times 10^{-3}$ & $10^{-3}$ & 0.01 \\
\rowcolor{gray!20} \cellcolor{white} & \cellcolor{white} $\beta_1$ & 0.95 & 0.95 & 0.95 & 0.95 \\
\rowcolor{gray!20} \cellcolor{white} \multirow{-6}{*}{EVON} & \cellcolor{white} $1 - \beta_2$ & 0.05 & 0.05 & 0.05 & 0.05 \\
\midrule
\midrule
Optimizer & Hyperparameter & MNIST & RESISC45 & SUN397 & SVHN \\
\midrule
 & $\alpha$ & $1.98\times 10^{-6}$ & $5.1\times 10^{-6}$ & $1.4\times 10^{-5}$ & $1.46\times 10^{-5}$ \\
\multirow{-2}{*}{AdamW} & $\delta$ & $1.7\times 10^{-4}$ & $1.08\times 10^{-3}$ & $4\times 10^{-4}$ & 0.01245 \\
\cmidrule{2-6}
 & $\alpha$ & $6.75\times 10^{-3}$ & $2.78\times 10^{-3}$ & $5.28\times 10^{-3}$ & $8.09\times 10^{-4}$ \\
 & $\delta$ & $3.39\times 10^{-3}$ & $1.67\times 10^{-3}$ & $3.51\times 10^{-4}$ & $2.88\times 10^{-3}$ \\
 & $\zeta$ & $5\times 10^{8}$ & $5\times 10^{8}$ & $10^{8}$ & $5\times 10^{8}$ \\
 & Hessian init $h_0$ & $10^{-3}$ & 0.01 & $10^{-3}$ & 0.1 \\
 & $\beta_1$ & 0.95 & 0.99 & 0.95 & 0.99 \\
\multirow{-6}{*}{IVON} & $1 - \beta_2$ & $5\times 10^{-4}$ & $10^{-4}$ & $10^{-3}$ & $5\times 10^{-4}$ \\
\cmidrule{2-6}
 & $\alpha$ & $5.73\times 10^{-6}$ & $1.27\times 10^{-5}$ & $7.5\times 10^{-6}$ & $9.2\times 10^{-5}$ \\
 & $\delta$ & $2.57\times 10^{-3}$ & $7.33\times 10^{-3}$ & $3.83\times 10^{-4}$ & $1.89\times 10^{-4}$ \\
 & $(\beta_1, 1-\beta_2)$ & $(0.9,\, 0.05)$ & $(0.85,\, 0.01)$ & $(0.9,\, 0.01)$ & $(0.95,\, 0.05)$ \\
\multirow{-4}{*}{SOAP} & $\beta$ for $\vL$ and $\vR$ & 0.99 & 0.99 & 0.95 & 0.95 \\
\midrule
\rowcolor{gray!20} \cellcolor{white} & \cellcolor{white} $\alpha$ & 0.04798 & 0.09509 & 0.03325 & 0.08859 \\
\rowcolor{gray!20} \cellcolor{white} & \cellcolor{white} $\delta$ & $9.45\times 10^{-6}$ & $1.33\times 10^{-6}$ & $1.08\times 10^{-7}$ & $3.08\times 10^{-6}$ \\
\rowcolor{gray!20} \cellcolor{white} & \cellcolor{white} $\zeta$ & $10^{7}$ & $10^{8}$ & $5\times 10^{8}$ & $10^{10}$ \\
\rowcolor{gray!20} \cellcolor{white} & \cellcolor{white} Hessian init $h_0$ & 0.01 & $10^{-3}$ & $10^{-3}$ & 0.01 \\
\rowcolor{gray!20} \cellcolor{white} & \cellcolor{white} $\beta_1$ & 0.95 & 0.95 & 0.95 & 0.95 \\
\rowcolor{gray!20} \cellcolor{white} \multirow{-6}{*}{EVON} & \cellcolor{white} $1 - \beta_2$ & 0.05 & 0.05 & 0.05 & 0.05 \\
\bottomrule
\end{tabular}

\vspace{1ex}
\caption{\textbf{Best hyperparameters for CLIP finetuning.} For each dataset and optimizer, we report the best hyperparameter values found during the grid search. In these experiments, we enable adaptive hessian clipping (with a loose 2.0 ratio) and preconditioned gradient whitening for EVON.
\label{tab:clip_best_hparams_hyperparams}}
\end{table}

\end{document}